\documentclass{article}

\usepackage{microtype}
\usepackage{graphicx}
\usepackage{multirow}
\usepackage{makecell}
\usepackage{subcaption}
\usepackage{booktabs} 

\usepackage{hyperref}

\usepackage[accepted]{icml2026}

\usepackage{amsmath}
\usepackage{amssymb}
\usepackage{mathtools}
\usepackage{amsthm}

\usepackage[capitalize,noabbrev]{cleveref}

\theoremstyle{plain}
\newtheorem{theorem}{Theorem}[section]
\newtheorem{proposition}[theorem]{Proposition}
\newtheorem{lemma}[theorem]{Lemma}

\theoremstyle{definition}

\theoremstyle{remark}

\usepackage[textsize=tiny]{todonotes}

\icmltitlerunning{Deterministic Differentiable Structured Pruning for Large Language Models}

\begin{document}

\twocolumn[
  \icmltitle{Deterministic Differentiable Structured Pruning for Large Language Models}



  \icmlsetsymbol{equal}{*}

  \begin{icmlauthorlist}
    \icmlauthor{Weiyu Huang}{equal,tsinghua,ant}
    \icmlauthor{Pengle Zhang}{equal,tsinghua}
    \icmlauthor{Xiaolu Zhang}{ant}
    \icmlauthor{Jun Zhou}{ant}
    \icmlauthor{Jun Zhu}{tsinghua}
    \icmlauthor{Jianfei Chen}{tsinghua}

  \end{icmlauthorlist}

  \icmlaffiliation{tsinghua}{Department of Computer Science and Technology, Tsinghua University, Beijing, China}
  \icmlaffiliation{ant}{Ant Group, Beijing, China}

  \icmlcorrespondingauthor{Jianfei Chen}{jianfeic@tsinghua.edu.cn}

  \icmlkeywords{Machine Learning, ICML}

  \vskip 0.3in
 ]



\printAffiliationsAndNotice{\icmlEqualContribution}  

\begin{abstract}
Structured pruning reduces LLM inference cost by removing low-importance architectural components. This can be viewed as learning a multiplicative gate for each component under an $\ell_0$ sparsity constraint. Due to the discreteness of the $\ell_0$ norm, prior work typically adopts stochastic hard-concrete relaxations to enable differentiable optimization; however, this stochasticity can introduce a train--test mismatch when sampled masks are discretized for deployment and restricts masks to a bounded, near-binary range. To address this, we propose Deterministic Differentiable Pruning (DDP), a mask-only optimization method that eliminates stochasticity by directly optimizing a deterministic soft surrogate of the discrete $\ell_0$ objective. Compared with prior approaches, DDP offers greater expressiveness, reduced train--test mismatch, and faster convergence. We apply our method to several dense and MoE models, including Qwen3-32B and Qwen3-30B-A3B, achieving a performance loss as small as 1\% on downstream tasks while outperforming previous methods at 20\% sparsity. We further demonstrate end-to-end inference speedups in realistic deployment settings with vLLM. Our \href{https://github.com/yellowtree123/Deterministic-Differentiable-Pruning}{code} is publicly available.
\end{abstract}

\section{Introduction}

Large language models (LLMs) \cite{NIPS2017_3f5ee243, NEURIPS2020_1457c0d6, Guo_2025} have demonstrated remarkable performance across a wide range of challenging tasks, including reasoning, code generation, and decision making. Despite these advances, deploying LLMs at scale remains resource-intensive, often requiring substantial compute, memory, and serving infrastructure. These costs pose a significant barrier to practical deployment, especially in budget-constrained settings.

Structured pruning \cite{ma2023llmpruner, zhang2024loraprune, li2024lorap} offers a promising way to reduce these costs by removing entire architectural components—e.g., attention heads, hidden dimensions, or MLP channels—to reduce model size and inference cost. Unlike unstructured sparsity \cite{frantar2023sparsegpt, sun2023simple}, which prunes individual weights and often requires specialized support for speedups, structured pruning is compatible with standard dense operators and existing hardware, enabling efficient execution without added system complexity.

Despite its appeal, most structured pruning methods for modern LLMs rely on efficient one-shot approaches \cite{li2024lorap, guoslimllm} that select components using heuristic importance scores. While fast, these heuristics can be brittle and often incur substantial quality degradation under aggressive sparsity. In contrast, learning sparsity via end-to-end weight updates is typically prohibitive at LLM scale, leaving a gap between practicality and quality.

A key observation motivating our approach is that \emph{mask-only} optimization provides a practical middle ground between heuristic one-shot pruning and full weight tuning. Here, a \emph{mask} is a set of learnable gating variables that determine whether each architectural component (e.g., a head or channel) is kept or pruned. In our setting, \emph{all pre-trained weights are frozen and only the masks are optimized}. This is feasible because the search space is much smaller—even smaller than LoRA modules: for DeepSeek-R1 \cite{Guo_2025} with 685B parameters, the number of mask variables is only on the order of tens of millions, enabling scalable gradient-based optimization that converges within a small token budget (e.g., $<30$M tokens). As a result, mask learning can discover higher-quality sparsity patterns than heuristic one-shot pruning without the cost of full-model training.

Prior approaches that learn sparsity masks \cite{guo2023compresso, xia2024sheared, liu2025pat} improve over heuristic pruning, but they do not fully meet our goal of lightweight, scalable mask-only optimization. First, they typically learn masks \emph{together with} weight updates (full or LoRA fine-tuning), which increases training cost and often requires large-scale pretraining-style data for stable recovery. Second, they commonly rely on stochastic hard-concrete relaxations \cite{louizos2018learning}; in a mask-only setting, this constrains masks to a bounded, near-binary range and introduces sampling noise, leading to slower convergence and a potential train--test mismatch when deterministic masks are required for sparsity control and deployment.

To address these issues, we propose \textbf{D}eterministic \textbf{D}ifferentiable \textbf{P}runing (DDP), a lightweight mask-only structured pruning framework that learns structured sparsity patterns via gradient-based optimization. We formulate pruning as an $\ell_0$-regularized optimization problem and enforce the sparsity constraint with an augmented Lagrangian method (ALM). To handle the non-differentiability of the $\ell_0$ norm, DDP introduces a deterministic smooth surrogate that is annealed to a sharp $\ell_0$ objective during training, removing sampling noise and avoiding the train--test mismatch induced by stochastic masks. Furthermore, DDP decouples the mask values used in the forward pass from those used for regularization, expanding the effective mask value range beyond near-binary constraints and improving performance. In addition, DDP includes an explicit binarization loss that encourages mask polarization, leading to faster and more stable convergence. We further show that knowledge distillation integrates naturally into our framework with minimal additional overhead. We validate DDP across both dense and mixture-of-experts (MoE) models, scaling to flagship open-source models with tens of billions of parameters. Across multiple LLM families and standard benchmarks, DDP consistently delivers superior quality--efficiency trade-offs compared to prior state-of-the-art methods.

\paragraph{Conflict of Interest} The authors declare that they have no financial conflicts of interest related to this work.

\section{Preliminary}

This section provides a mathematical formulation of \emph{mask optimization} for structured pruning. We first introduce a unified masking framework for pruning LLM submodules (\cref{sec:mask_formulation}), then cast targeted pruning as a constrained optimization problem and review how prior hard-concrete reparameterization methods address it (\cref{sec:mask}). We further analyze the objective mismatch of these approaches in the context of structured LLM pruning (\cref{sec:drawback}).

\subsection{Unified Masking Formulation}
\label{sec:mask_formulation}
Transformer-based LLMs are composed of repeated submodules such as multi-head attention and MLPs, whose outputs can be written as a sum of $K$ independent components (e.g., attention heads or intermediate channels). We model structured pruning by attaching a multiplicative mask to each component; given input $\mathbf{X}$, the module output is
\begin{equation}
\label{eq:unified_sum_masked}
\mathbf{y} \;=\; \sum_{k=1}^{K} m_k\, \mathbf{f}_k(\mathbf{X}),
\qquad m_k \in \mathbb{R},
\end{equation}
where $\mathbf{f}_k(\cdot)$ is the $k$-th component and $m_k$ is the corresponding gate ($m_k=0$ prunes; $m_k\neq 0$ keeps). For dense LLMs, we apply this to both attention and MLP blocks.

\paragraph{Multi-head attention.}
For a multi-head attention layer with $H$ heads, we take $K=H$ and define the contribution of the $h$-th head as
\begin{equation}
\label{eq:head_component}
\mathbf{f}^{\text{attn}}_{h}(\mathbf{X})
=
\mathrm{Attn}\!\big(\mathbf{X}\mathbf{W}^{(h)}_Q,\; \mathbf{X}\mathbf{W}^{(h)}_K,\; \mathbf{X}\mathbf{W}^{(h)}_V\big)\,\mathbf{W}^{(h)}_O,
\end{equation}
where $\mathrm{Attn}(\cdot)$ is the scaled dot-product attention.
\paragraph{MLP channels.}
For a (gated) MLP with intermediate width $C$, we set $K=C$ and write
\begin{equation}
\label{eq:mlp_component}
\mathbf{f}^{\text{mlp}}_{j}(\mathbf{X})
=
\big(\varphi(\mathbf{X}\mathbf{u}_j)\odot(\mathbf{X}\mathbf{g}_j)\big)\,\mathbf{v}_j,
\end{equation}
where $\mathbf{u}_j,\mathbf{g}_j$ are the $j$-th up/gate columns, $\mathbf{v}_j$ is the corresponding down-projection row, and $\varphi(\cdot)$ denotes the element-wise nonlinearity (e.g., GELU).

Given an $L$-layer model, the mask variables are summarized in \cref{table:masks}.

\begin{table}[h]
  \centering
  \caption{Mask variables for attention and MLP pruning.}
  \label{table:masks}
  \begin{tabular}{ccc}
    \toprule
    \textbf{Submodule} & \textbf{Attention} & \textbf{MLP} \\
    \midrule
    Masks & $\boldsymbol{m}^{\text{attn}} \in \mathbb{R}^{H}$ ($\times L$) &
            $\boldsymbol{m}^{\text{mlp}} \in \mathbb{R}^{C}$ ($\times L$) \\
    \bottomrule
  \end{tabular}
\end{table}
\subsection{Mask Optimization by $\ell_0$ Regularization}
\label{sec:mask}

Our goal is to optimize the collection of \emph{structured} masks $M=\{\boldsymbol{m}^{\text{attn}},\boldsymbol{m}^{\text{mlp}}\}$ that minimize the language modeling objective while meeting a prescribed sparsity budget. For notational convenience, we drop the explicit set notation and use $\boldsymbol{m}$ to denote a generic structured mask in $M$. A natural starting point is $\ell_0$-regularized training,
\begin{equation}
\label{eq:l0_reg}
\min_{\boldsymbol{m}}\; \mathcal{L}_{\mathrm{ce}}(\theta,\boldsymbol{m})\;+\;\lambda \|\boldsymbol{m}\|_0,
\end{equation}
where $\mathcal{L}_{\mathrm{ce}}$ denotes the cross-entropy loss and $\|\boldsymbol{m}\|_0$ is the $\ell_0$-norm, which counts the number of active components.

In practice, deployment often requires an \emph{explicit} sparsity level rather than tuning a regularization strength $\lambda$. Therefore, \cref{eq:l0_reg} can be further cast into a constrained optimization problem,
\begin{equation}
\label{eq:l0_constrained}
\min_{\boldsymbol{m} \in \mathcal{S}}\; \mathcal{L}_{\mathrm{ce}}(\theta,\boldsymbol{m}),
\end{equation}
where $\mathcal{S}$ denotes the target constraint set. In particular, we enforce a targeted \emph{keep ratio} $\rho$ for $\boldsymbol{m}$ (equivalently, a pruning/sparsity ratio of $\alpha \triangleq 1-\rho$)
\begin{equation}
\label{eq:sparsity_constraint}
\bar{m} \triangleq \frac{1}{K}\|\boldsymbol{m}\|_0 \;=\; \frac{1}{K}\sum_{k=1}^{K} \|m_k\|_0 \;=\; \rho ,
\end{equation}
where $\bar{m}$ denotes the average keep ratio of $\|\boldsymbol{m}\|_0$.

Directly solving \cref{eq:l0_constrained} is difficult in the large-scale LLM setting because current optimization methods are based on gradient descent and are tailored to unconstrained, smooth objectives. A common remedy is to convert the constraint into an unconstrained objective via an augmented Lagrangian method (ALM). Specifically, we add a sparsity penalty for each $\boldsymbol{m}$ that takes the form:
\begin{equation}
\label{eq:alm_generic}
\mathcal{L}_{\text{sparsity}}(\|\boldsymbol{m}\|_0)
\triangleq
\lambda_1(\bar{m}-\rho)
+
\lambda_2(\bar{m}-\rho)^2, 
\end{equation}
where $\lambda_1$ and $\lambda_2$ act as the Lagrange multiplier and quadratic penalty, respectively.

\paragraph{Hard-Concrete Relaxation.}
While the augmented Lagrangian enforces an explicit sparsity constraint, the fundamental difficulty remains: the $\ell_0$ regularizer is discrete and non-differentiable. To enable gradient-based optimization, prior work \cite{fang2024maskllm, wang-etal-2020-structured, xia2024sheared} adopts the hard-concrete relaxation \cite{louizos2018learning}, redefining $\boldsymbol{m}$ as a $[0,1]$-valued \emph{random} mask variable to model $\ell_0$ norm. In the forward pass, masks $\boldsymbol{m}$ are generated using the hard-concrete mapping, with randomness injected via auxiliary variable $\boldsymbol{u}$, we define the projection $\boldsymbol{m} =\Phi (\boldsymbol{z}, \boldsymbol{u})$ as:

\begin{equation}
\label{eq:hard_concrete}
\begin{aligned}
\boldsymbol{u} &\sim U(0,1),
\quad
\boldsymbol{v} = \sigma\!\big(\log \boldsymbol{u}-\log(1-\boldsymbol{u})+\boldsymbol{z}\big),\\
\bar{\boldsymbol{v}} &= \boldsymbol{v}\,(r-l)+l,
\quad
\boldsymbol{m} = \mathrm{Clamp}(\bar{\boldsymbol{v}},0,1).
\end{aligned}
\end{equation}

where $\sigma(\cdot)$ denotes the sigmoid function, $l<0<1<r$ are stretching parameters that concentrate probability mass at ${0,1}$, and $\boldsymbol{z}$ denotes the corresponding latent parameter to be optimized. This relaxation admits a closed-form surrogate for the expected $\ell_0$ norm:
\begin{equation}
\label{eq:expect_l0}
\mathbb{E}\big[\|\boldsymbol{m}\|_0\big]
= \sum_{k=1}^{K} \mathbb{P}(m_k>0)
= \sum_{k=1}^{K} \sigma\!\left(z_k - \log(-l/ r)\right).
\end{equation}
Since the mask $\boldsymbol{m}$ is now a random variable, the expected keep ratio $ \mathbb{E}\big[\|\boldsymbol{m}\|_0\big] $ can replace $\|\boldsymbol{m}\|_0$ in the augmented Lagrangian penalty in \cref{eq:alm_generic} to enforce the target sparsity level.

\subsection{Drawbacks of Hard-Concrete Relaxation}
\label{sec:drawback}

The resulting objective used by hard-concrete methods is
\begin{equation}
\label{eq:hard_conrete_optimization}
\min_{\boldsymbol{z}}\;
\mathbb{E}_{\boldsymbol{u}\sim U(0,1)}
\Big[
\mathcal{L}_{\mathrm{ce}}\big(\theta,\boldsymbol{m}\big)
+
\mathcal{L}_{\mathrm{sparsity}}\big(\|\boldsymbol{m}\|_0\big)
\Big],
\end{equation}
where $\boldsymbol{m}=\Phi(\boldsymbol{z},\boldsymbol{u})$ is the hard-concrete mapping from noise $\boldsymbol{u}$ and logits $\boldsymbol{z}$ to a bounded mask.

We discover two main drawbacks in this formulation. \textbf{(1) Train–Test Mismatch.} While masks $\boldsymbol{m}$ are sampled during training, evaluation and deployment require deterministic masks. Converting these random variables into discrete values at inference creates a train–test mismatch, which can lead to unstable sparsity enforcement and degraded performance. Moreover, by introducing randomness into training, the optimization objective becomes an expectation over random variable $\boldsymbol{u}$. This introduces noise into both the forward and backward passes, causing slow convergence. \textbf{(2) Limited mask expressivity.} In the unified masked formulation (\cref{eq:l0_reg}), the masking variables $\boldsymbol{m}$ are naturally real-valued when calculating language modeling loss $\mathcal{L}_{\mathrm{ce}}$. However, the hard-concrete mapping constrains $\boldsymbol{m}$ to a bounded, near-binary range during training, shrinking the effective search space and potentially hindering the discovery of high-quality sparsity patterns.
\section{Method}
\label{sec:method}

\begin{figure*}[t]
  \centering
  \includegraphics[width=2\columnwidth]{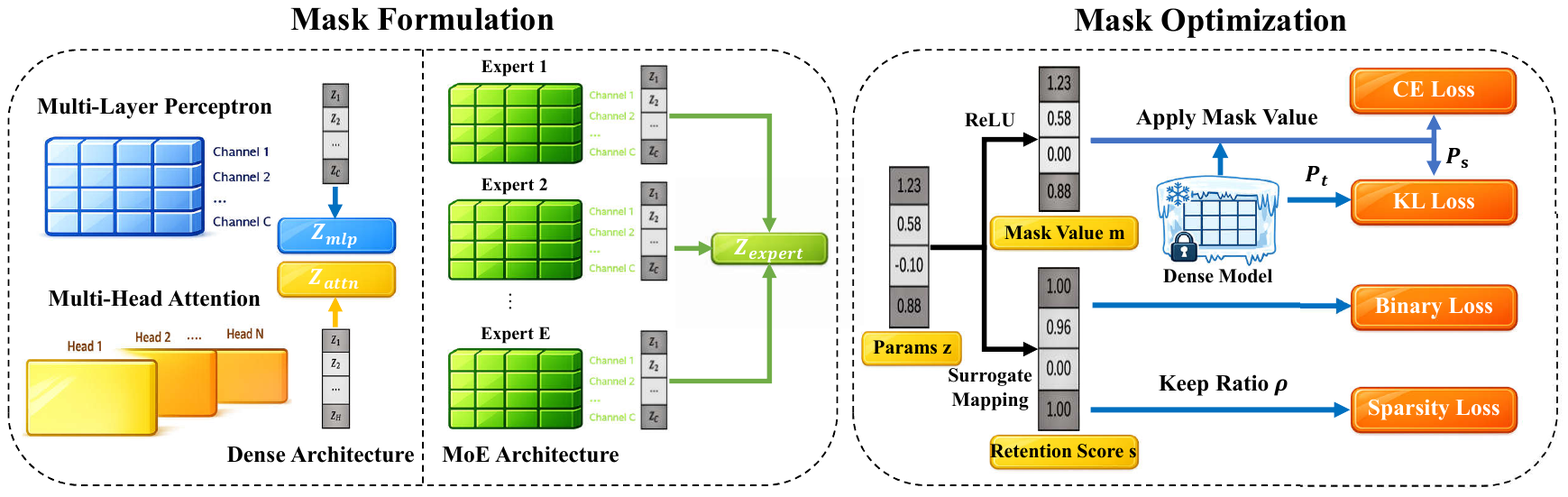}
  \caption{
  \textbf{Deterministic Differentiable Pruning overview.}
  \textbf{Left:} Masked formulation for dense and MoE models. For dense models, we prune attention heads and MLP channels; for MoE models, we prune expert channels only.
  \textbf{Right:} Mask-only optimization with decoupled forward masks and retention scores for regularization, enabling deterministic training and an expanded mask range.
  }
  \label{fig:workflow}
\end{figure*}

To address these issues, we introduce Deterministic Differentiable Pruning (DDP), a fully deterministic mask-only optimization framework that operates over a continuous mask space. We present the method in \cref{sec:lightningprune}, followed by a convergence analysis in \cref{sec:convegence} and several practical extensions in \cref{sec:extension}. An overview of the method is shown in \cref{fig:workflow}.
 
 \subsection{Deterministic Differentiable Pruning}
 \label{sec:lightningprune}
 First, given the latent real-valued parameter $\boldsymbol{z}$, we replace the stochastic hard-concrete sampling in \cref{eq:hard_concrete} with a deterministic ReLU gate in the forward pass:
\begin{equation}
\label{eq:relu}
\boldsymbol{m} = \mathrm{ReLU}(\boldsymbol{z}).
\end{equation}
This expands the forward mask space from near binary values to $m_k\in[0,\infty)$, enabling continuous scaling of component contributions while avoiding negative mask values that could induce sign flips and undesired cancellation.

Furthermore, to address the non-differentiability of the $\ell_0$ norm \emph{without} introducing stochastic sampling, we construct an annealed soft surrogate that is used when computing the augmented Lagrangian sparsity term in \cref{eq:alm_generic}. Rather than regularizing the forward mask $\boldsymbol{m}$ directly, we apply a deterministic mapping that projects logits $\boldsymbol{z}$ to \emph{retention surrogate scores} $\boldsymbol{s}\in[0,1]$, where each $s_k$ measures how strongly the $k$-th component is retained. At iteration $t$, the mapping $\boldsymbol{s} =\phi(\boldsymbol{z};\mu_t)$ is defined as

\begin{equation}
\label{eq:lp_project}
\begin{aligned}
\boldsymbol{v} = \sigma\!\Big((\boldsymbol{z}-\mu_t)\tfrac{C_0}{\mu_t}\Big), &
\quad
\boldsymbol{\bar{v}} = \boldsymbol{v}\,(r-l)+l,\\
\boldsymbol{s} = \mathrm{Clamp}&(\boldsymbol{\bar{v}},0,1).
\end{aligned}
\end{equation}

where $\mu_t$ is an annealed sharpness parameter controlling how quickly the mapping transitions from $0$ to $1$. We fix $l=-0.1$, $r=1.1$, and $C_0\approx 2.4$ such that $s(0)=0$ and $s(2\mu_t)=1$. In particular, for any $z_k \ge 2\mu_t$, the score saturates to $s_k=1$, and the corresponding component is treated as fully retained in regularization.

The resulting $\boldsymbol{s}$ are then used in the Lagrangian regularization term to enforce the target keep ratio $\rho$. Denote  $\bar{s}=\frac{1}{K}\sum_k s_k$ as their mean. The penalty is given as:
\begin{equation}
\label{eq:lp_sparsity_loss}
\mathcal{L}_{\text{sparsity}}(\boldsymbol{s})
=
\lambda_1(\bar{s}-\rho)
+
\lambda_2(\bar{s}-\rho)^2.
\end{equation}

Let $T$ denote the total number of training steps. During training, we anneal $\mu_t$ according to
\begin{equation}
\label{eq:mu_t_calculation}
\mu_t = \mu_0 - (\mu_0 - \mu_T) \,\sqrt{\frac{t}{T}}.
\end{equation}
We set $\mu_0=0.5$ and initialize $z_k=1$ for all $k$, corresponding to a dense forward pass at the start of training, and then anneal $\mu_t$ close to 0. As illustrated in \cref{fig:soft-saturation}, the mapping becomes increasingly sharp over time, so the surrogate mapping falls back to exact $\ell_0$ behavior when $\mu_t \rightarrow  0$. We implement clamping and ReLU with a straight-through estimator (STE) to preserve gradients.

\begin{figure}[t]
  \centering
  \includegraphics[width=0.90\columnwidth]{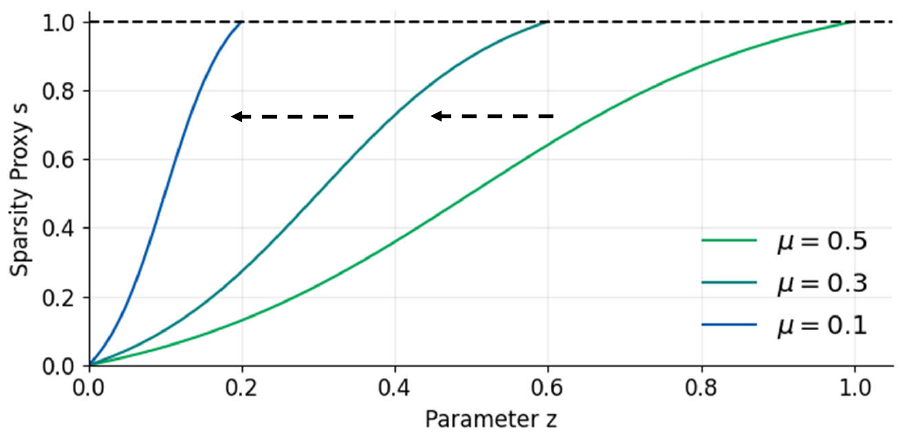}
  \caption{
  Deterministic surrogate mapping in DDP. Annealing $\mu$ sharpens the soft sigmoid projection, progressively approximating $\ell_0$ regularization.
  }
  \label{fig:soft-saturation}
  \vspace{-0.08in}
\end{figure}

Finally, while \cref{eq:lp_sparsity_loss} enforces the \emph{average} keep ratio, it does not by itself encourage individual retention scores to become decisively binary when $\mu_t$ is still large. To accelerate convergence, we introduce an additional binarization regularizer on $\{s_k\}_{k=1}^K$:
\begin{equation}
\label{eq:lp_binary}
\mathcal{L}_{\text{bin}}(\boldsymbol{s})
=
\lambda_3 \frac{1}{K}\sum_{k=1}^{K}s_k(1-s_k),
\end{equation}
which mostly penalizes intermediate values and is minimized at endpoints, thereby pushing each $s_k$ toward $\{0,1\}$. Combined with the deterministic surrogate mapping, this term stabilizes optimization by encouraging ambiguous components to commit early. In practice, we update the $\lambda$ coefficients via gradient ascent alongside the mask parameters so as to enforce the target sparsity constraint during training. Finally, after training we prune components whose mask values are zero and fold the remaining nonzero masks into the model parameters for deployment. We summarize our workflow in \cref{app:algorithm}.

\subsection{Analysis}
\label{sec:convegence}
Our final training objective is
\begin{equation}
\label{eq:lp_total}
\min_{\boldsymbol{z}}
\mathcal{L}_{\mathrm{ce}}(\theta,\boldsymbol{m})
+
\mathcal{L}_{\mathrm{sparsity}}(\boldsymbol{s})
+
\mathcal{L}_{\mathrm{bin}}(\boldsymbol{s}),
\end{equation}
where we optimize mask logits $\boldsymbol{z}$ with a deterministic forward gate
$\boldsymbol{m}=\mathrm{ReLU}(\boldsymbol{z})$ and annealed retention scores
$\boldsymbol{s}=\phi(\boldsymbol{z};\mu)$ enforcing the keep budget.
In the appendix, we show the following guarantee. For any fixed annealing level $\mu$ , the updates used in our implementation define a fully deterministic constrained optimization procedure for the relaxed constraint $\sum_k s_k=P$ (with $P=[ \rho K]$): under mild conditions, the resulting iterates admit accumulation points that are feasible and first-order stationary.
Moreover, when we run the procedure in stages with $\mu \downarrow 0$ and the binarization term drives $\boldsymbol{s}$ toward $\{0,1\}^K$, the relaxed budget $\sum_k s_k=P$ becomes equivalent to the discrete hard budget induced by the forward mask,
$\sum_k \mathbb{I}[z_k>0]=P$.
Consequently, for sufficiently large stages $r$, DDP outputs masks that satisfy the target $\ell_0$ keep budget \emph{exactly}, while each stage solution is a Karush–Kuhn–Tucker (KKT) point of the corresponding relaxed constrained surrogate.

\begin{theorem}[Exact budget recovery (informal)]
\label{thm:final_main}
Consider the staged optimization of \cref{eq:lp_total} with annealing $\mu \downarrow 0$.
Under certain assumptions, for each stage $r=1,2,3..$ (with fixed $\mu_r$), the deterministic augmented-Lagrangian updates generate iterates whose accumulation points are
(i) feasible for the relaxed equality constraint $\sum_k s_k=P$, and
(ii) first-order stationary (KKT) for the corresponding relaxed constrained surrogate under the same STE gradients used in practice.
If in addition the binarization loss drives $\boldsymbol{s}$ near $\{0,1\}^K$ and the logits become separated from the ReLU threshold for large $r$,
then for sufficiently large stages the KKT points produce masks that satisfy the \emph{exact} hard budget
\[
\sum_{k=1}^K \mathbb{I}[z_k>0] = P.
\]
\end{theorem}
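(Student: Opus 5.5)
The proof has two parts: a per-stage convergence argument for a fixed annealing level $\mu_r$, then a limiting argument as $r\to\infty$ that turns the relaxed budget into the hard one.

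\textbf{Per-stage convergence.} For fixed $\mu_r$ I will treat each stage as a standard equality-constrained problem
\[
\min_{\boldsymbol{z}} F_r(\boldsymbol{z}) \quad \text{s.t.} \quad h_r(\boldsymbol{z}) \triangleq \textstyle\sum_k \phi(z_k;\mu_r)-P=0,
\]
where $F_r=\mathcal{L}_{\mathrm{ce}}(\theta,\mathrm{ReLU}(\boldsymbol{z}))+\mathcal{L}_{\mathrm{bin}}(\phi(\boldsymbol{z};\mu_r))$. Here $\mathcal{L}_{\mathrm{sparsity}}$ is exactly the augmented-Lagrangian term, with $\lambda_1$ the multiplier and $\lambda_2$ the penalty, after rescaling by $1/K$.

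Since the clamp and ReLU use straight-through gradients, I will define the \emph{effective} gradient field $G_r(\boldsymbol{z})$ that the algorithm actually follows, and state the assumptions directly on it:
\begin{itemize}
\item $G_r$ is Lipschitz on a bounded level set, and the iterates stay bounded (e.g.\ because the logits are clipped or $\mathcal{L}_{\mathrm{ce}}$ is coercive along the mask directions);
\item the inner gradient steps are taken to approximate stationarity with tolerances $\epsilon_j\to 0$;
\item a constant-rank/LICQ condition holds, i.e.\ $\nabla h_r\neq 0$ at limit points. This holds whenever some $z_k$ lies in the sigmoid's non-saturated band $(0,2\mu_r)$.
\end{itemize}
Under these assumptions I invoke the classical augmented-Lagrangian convergence theorem (Bertsekas; Andreani--Birgin--Mart\'inez--Schuverdt): the multiplier update $\lambda_1\leftarrow\lambda_1+\eta\, h_r$ (the gradient-ascent step) combined with increasing or bounded $\lambda_2$ yields limit points that are feasible, since $h_r\to 0$ follows from a penalty argument plus LICQ. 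The same limit points are KKT for $G_r$: passing to the limit in $\|G_r(\boldsymbol{z}^j)+\lambda_1^j\nabla h_r(\boldsymbol{z}^j)\|\le\epsilon_j$ along a subsequence, with the multipliers bounded by LICQ, gives (i) and (ii). Because the whole procedure is deterministic, no expectation arguments are needed.

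\textbf{Exact budget recovery.} Take KKT points $\boldsymbol{z}^{(r)}$ with $\sum_k s_k^{(r)}=P$. I formalize the extra hypotheses as:
\begin{itemize}
\item $\max_k \min(s_k^{(r)},1-s_k^{(r)})\to 0$ (binarization);
\item there exists $\delta>0$ with $|z_k^{(r)}|\ge\delta$ for all $k$ and all large $r$ (separation).
\end{itemize}
Once $2\mu_r<\delta$, every $z_k^{(r)}\ge\delta$ gives $s_k=1$ exactly (saturation at $2\mu_r$), and every $z_k^{(r)}\le-\delta$ gives $s_k=0$ exactly (clamp, since $\phi(0)=0$ and $\phi$ is monotone). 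Hence $s_k^{(r)}=\mathbb{I}[z_k^{(r)}>0]$ for every $k$, and the relaxed constraint becomes $\sum_k \mathbb{I}[z_k>0]=P$. The ReLU forward mask then has exactly $P$ nonzero entries, so pruning the zeros and folding the rest meets the budget exactly. Note that with separation the binarization hypothesis is in fact implied for large $r$; I will remark that it is what makes separation plausible, but it is not needed logically.

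\textbf{Main obstacle.} The hard step is the per-stage KKT claim under STE gradients. The STE field is not the gradient of any function, so the usual descent-lemma argument for the inner loop fails. My first attempt is to define the stationarity notion relative to $G_r$ itself, so that KKT means $G_r+\lambda\nabla h_r=0$. I then assume the inner solver reaches approximate $G_r$-stationarity (this can be justified when $G_r$ is monotone-consistent on the active region, or simply assumed as a "certain assumption"). With that in place, the outer-loop analysis needs only feasibility and multiplier boundedness, which the ALM argument supplies.
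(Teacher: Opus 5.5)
Your proposal is correct. The per-stage half follows the paper's Theorem~\ref{thm:proof_thm_alm_app}, which uses the same ingredients: STE-interpreted $C^1$ regularity, nondegeneracy $\|\nabla c(\boldsymbol{z}_t)\|\ge\underline{\gamma}$, inexact stationarity with $\varepsilon_t\to0$, and bounded iterates with penalty growth. The paper argues directly rather than citing Bertsekas/ABMS. The effective multiplier $\tilde\nu_t=\nu_t+\gamma_t c(\boldsymbol{z}_t)$ equals $\nu_{t+1}$ and is bounded by $(G+\varepsilon_t)/\underline{\gamma}$. Persistent infeasibility with $\gamma_t\to\infty$ would contradict that bound, and passing to the limit gives KKT.

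That identity needs the multiplier step to equal the penalty. With your gradient-ascent step $\eta$, you need the safeguarded form of the classical theorem. Concretely, you must add the assumption that the $\lambda_1$ iterates stay bounded.

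The genuine difference is in the budget-recovery step. The paper uses Lemma~\ref{lem:phi_margin}: for any $\phi$ that is monotone and converges pointwise to $\mathbb{I}[z>0]$, a two-sided margin gives $\bigl|S_\mu(\boldsymbol{z})-|\mathcal{A}(\boldsymbol{z})|\bigr|\le\tfrac12$ once $\mu<\bar\mu(\delta)$. Integrality of $P$ and $|\mathcal{A}|$ then forces equality. You instead use the fact that the stretched clamp in \cref{eq:lp_project} saturates exactly: $\phi=0$ for $z\le0$ and $\phi=1$ for $z\ge2\mu$. So once $2\mu_r<\delta$, $s_k=\mathbb{I}[z_k>0]$ identically, and the relaxed constraint \emph{is} the hard one.

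Your route gives an explicit threshold, needs the margin only on the positive logits, and needs no rounding argument. The paper's route does not rely on exact saturation, so it also covers non-saturating surrogates such as an unclamped sigmoid. Neither argument uses the binarization hypothesis logically; the paper, like you, treats it only as what makes the margin plausible.

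One remark needs correcting. Your sufficient condition for $\nabla h_r\neq0$ (some $z_k\in(0,2\mu_r)$) is exactly what separation rules out once $2\mu_r<\delta$. At those points every $s_k$ is clamped, so the true derivative of $h_r$ vanishes. Literal LICQ therefore fails at the very points you use in the second half. The hypotheses are consistent only if $\nabla h_r$ is the STE derivative with the clamp passed through. Its components $\tfrac{1.2\,C_0}{\mu_r}\,\sigma'\big((z_k-\mu_r)\tfrac{C_0}{\mu_r}\big)$ are strictly positive everywhere. That is the reading under which the paper states its nondegeneracy assumption, and you should justify LICQ that way.
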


Proofs and detailed analysis are in \cref{sec:theory_appendix}.

\subsection{Extensions}
\label{sec:extension}

\paragraph{Distillation.} Since we keep the pre-trained weights fixed and optimize only the masks, the dense model serves as a parameter-free teacher. This incurs no extra optimizer memory and requires only two forward passes. We define the KL divergence between the teacher and student as:

\begin{equation}
\label{eq:kl_loss}
\mathcal{L}_{\mathrm{kl}}(\boldsymbol{m},\theta)
=
 \sum_{i}
D_{\mathrm{kl}}\!\left(
P_t(\mathbf{X}, i)\,\|\,P_s( \mathbf{X}, i)
\right)
,
\end{equation}

where $P_t(\mathbf{X}, i)$ and $P_s(\mathbf{X}, i)$ are the teacher/student next-token distributions over the vocabulary at position $i$, respectively. We add the distillation loss to $\mathcal{L}_{\mathrm{ce}}$ during training.

\paragraph{MoE.}
For Mixture-of-Experts (MoE) models, since most parameters reside in the expert MLPs, we apply pruning only to the experts and keep the attention blocks unchanged. We use the same channel-wise decomposition within each expert and weight expert outputs by the router scores $\pi_e(\mathbf{X})$:
\begin{equation}
\label{eq:moe_compact}
\mathrm{MoE}(\mathbf{X})
=
\sum_{e=1}^{E} \pi_e(\mathbf{X})
\sum_{j=1}^{C} m_{e,j}\,\mathbf{f}^{\text{mlp}}_{e,j}(\mathbf{X}),
\end{equation}
where $E$ is the number of experts and $C$ is the intermediate channel dimension per expert. Accordingly, the mask parameters for a single MoE layer are $\boldsymbol{m}^{\text{moe}} \in \mathbb{R}^{E \times C}$.

\paragraph{Fine-grained Sparsity Control.}
Deterministic Differentiable Pruning supports different sparsity granularities by applying the sparsity regularizer in \cref{eq:lp_sparsity_loss} at the group level. We can partition mask entries into groups $\mathcal{G}$ (e.g., per layer or per expert) and enforce a per-group keep-ratio budget:
\begin{equation}
\label{eq:group_lp_sparsity_loss}
\mathcal{L}_{\text{sparsity}}(\boldsymbol{s})
=
\frac{1}{|\mathcal{G}|}
\sum_{g \in \mathcal{G}}
\Big[
\lambda_{1}\big(\bar{s}_g-\rho\big)
+
\lambda_{2}\big(\bar{s}_g-\rho\big)^2
\Big],
\end{equation}
where $\bar{s}_g = \frac{1}{|g|}\sum_{k \in g} s_k$ is the average retention score in group $g$. This enables uniform sparsity across layers or within each MoE expert, and often improves speedups by yielding more regular, hardware-friendly patterns.

\begin{table*}[t]
  \caption{Performance of pruned dense LLMs under different methods. All baseline methods are evaluated after fine-tuning. \textbf{Bold} indicates the best performance at the same pruning ratio.}
  \label{table:dense_main}
  \centering
    \resizebox{0.95\textwidth}{!}{%
  \begin{tabular}{c|cc|cc|cc|cc|cc|cc}
    \toprule
    \textbf{Model}
      & \multicolumn{4}{c|}{\textbf{LLaMA-7B}} 
      & \multicolumn{4}{c|}{\textbf{LLaMA-2-7B}}
      & \multicolumn{4}{c}{\textbf{LLaMA-13B}}\\
    \midrule
      \textbf{Sparsity Ratio} & \multicolumn{2}{c|}{\textbf{20\%}} 
      & \multicolumn{2}{c|}{\textbf{50\%}} 
      & \multicolumn{2}{c|}{\textbf{20\%}} 
      & \multicolumn{2}{c|}{\textbf{50\%}} 
    & \multicolumn{2}{c|}{\textbf{20\%}} 
      & \multicolumn{2}{c}{\textbf{50\%}}\\
    \midrule
      \textbf{Method} & \textbf{Wiki2}$\downarrow$ & \textbf{Mean Acc}$\uparrow$ 
      & \textbf{Wiki2}$\downarrow$ & \textbf{Mean Acc}$\uparrow$
      & \textbf{Wiki2}$\downarrow$ & \textbf{Mean Acc}$\uparrow$
      & \textbf{Wiki2}$\downarrow$ & \textbf{Mean Acc}$\uparrow$
    & \textbf{Wiki2}$\downarrow$ & \textbf{Mean Acc}$\uparrow$
      & \textbf{Wiki2}$\downarrow$ & \textbf{Mean Acc}$\uparrow$ \\
    \midrule
    \textbf{Original (Dense)} 
      & 12.62 & 65.96 
      & 12.62 & 65.96
      & 12.18 & 66.63
      & 12.18 & 66.63 
      & 11.58 & 68.79
      & 11.58 & 68.79
      \\
    \midrule
    LoRAPrune 
      & 16.80 & 60.05 
      & 30.12 & 49.71
      & -- & --
      & -- & -- 
      & -- & --
      & -- & -- \\
    LoRAP 
      & 16.35 & 61.70 
      & 30.90 & 52.17
      & 14.67 & 61.20
      & \textbf{26.26} & 52.31 
      & 13.58 & 64.71
      & 22.66 & 58.20
      \\
    SlimLLM 
      & 15.55 & 62.41 
      & 26.71 & 53.16
      & 15.28 & 61.70
      & 27.29 & 52.02 
      & 13.35 & 64.21
      & 25.64 & 54.75 
      \\
    Ours 
      & \textbf{15.20} & \textbf{64.13}
      & \textbf{26.70} & \textbf{56.07}
      & \textbf{14.39} & \textbf{64.82}
      & 26.34 & \textbf{56.70} 
      & \textbf{12.71} & \textbf{66.89}
      & \textbf{20.32} & \textbf{62.14}      
      \\
    \bottomrule
  \end{tabular}%
  }
  \vskip -0.1in
\end{table*}

\begin{table*}[t]
  \caption{Performance of pruned Mixture-of-Experts LLMs under different methods. \textbf{Bold} indicates the best performance at the same pruning ratio.}
  \label{table:moe_main}
  \centering
  \resizebox{0.95\textwidth}{!}{%
  \begin{tabular}{c|cc|cc|cc|cc|cc|cc}
    \toprule
    \textbf{Model}
      & \multicolumn{6}{c|}{\textbf{DeepSeekMoE-16B-Base}}
      & \multicolumn{6}{c}{\textbf{Qwen3-30B-A3B}} \\
    \midrule
    \textbf{Sparsity Ratio}
      & \multicolumn{2}{c|}{\textbf{20\%}}
      & \multicolumn{2}{c|}{\textbf{40\%}}
      & \multicolumn{2}{c|}{\textbf{60\%}}
      & \multicolumn{2}{c|}{\textbf{20\%}}
      & \multicolumn{2}{c|}{\textbf{40\%}}
      & \multicolumn{2}{c}{\textbf{60\%}} \\
    \midrule
     \textbf{Method} & \textbf{C4}$\downarrow$ & \textbf{Mean Acc}$\uparrow$
      & \textbf{C4}$\downarrow$ & \textbf{Mean Acc}$\uparrow$
      & \textbf{C4}$\downarrow$ & \textbf{Mean Acc}$\uparrow$
      & \textbf{C4}$\downarrow$ & \textbf{Mean Acc}$\uparrow$
      & \textbf{C4}$\downarrow$ & \textbf{Mean Acc}$\uparrow$
      & \textbf{C4}$\downarrow$ & \textbf{Mean Acc}$\uparrow$ \\
    \midrule
    \textbf{Original (Dense)}
      & 9.05 & 62.28
      & 9.05 & 62.28
      & 9.05 & 62.28
      & 12.13 & 71.08
      & 12.13 & 71.08
      & 12.13 & 71.08 \\
    \midrule
    NAEE
      & 10.07 & 60.51
      & 12.80 & 54.94
      & 29.44 & 45.28
      & 12.44 & 69.94 
      & 13.87 & 65.77
      & 19.37 & 57.13 \\
    $\text{D}^2$-MoE
      & 12.62 & 58.97
      & 17.22 & 54.32
      & 34.54 & 46.72
      & 18.52 & 66.35
      & 35.48 & 62.77
      & 68.36 & 52.02 \\
    Camera-P
      & 9.84 & 61.03
      & 11.68 & 58.58
      & 18.10 & 51.62
      & 12.25 & 69.94
      & 15.58 & 67.46
      & 24.48 & 59.03 \\
    HEAPr
      & 9.85 & 60.62
      & 12.34 & 57.70
      & 21.25 & 51.38
      & - & -
      & - & -
      & - & - \\
    Ours
      & \textbf{9.38} & \textbf{61.84}
      & \textbf{10.75} & \textbf{60.38}
      & \textbf{12.65} & \textbf{58.18}
      & \textbf{11.58} & \textbf{70.04}
      & \textbf{12.10} & \textbf{67.65}
      & \textbf{13.54} & \textbf{63.35} \\
    \bottomrule
  \end{tabular}%
  }
  \vskip -0.1in
\end{table*}

\section{Related Work}

 Based on how pruning decisions are obtained and whether model weights are updated, existing methods can be grouped into (i) one-shot pruning, (ii) sparsity-aware training, and (iii) mask-only optimization.

\paragraph{One-shot Pruning}
One-shot pruning methods remove structures using heuristic importance criteria. For dense LLMs, LLM-Pruner~\cite{ma2023llmpruner} scores weights or modules via gradient-based sensitivity, while LoRAPrune~\cite{zhang2024loraprune} estimates importance through lightweight LoRA~\cite{hu2022lora} updates. LoRAP~\cite{li2024lorap} further combines a gradient-free channel criterion with low-rank attention compression, and SlimLLM~\cite{guoslimllm} prunes attention heads and MLP channels using similarity-based scores. For MoE architectures, NAEE~\cite{lu2024not} prunes experts via (approximate) search over expert combinations, while $D^2$-MoE~\cite{gudelta} constructs shared experts via weighted merging. Camera~\cite{xu2025camera} and HEAPr~\cite{li2025heapr} treat expert channels as prunable units and rely on Hessian- or activation-based importance to identify redundancy. Though fast, these methods can incur larger performance drops because they rely on handcrafted metrics.

\paragraph{Sparsity-aware Training}
Another line of work learns sparsity patterns while updating model weights through loss-based training, often using hard-concrete relaxations~\cite{louizos2018learning} to enable gradient-based mask learning. For example, ShearedLLaMA~\cite{xia2024sheared} learns pruning masks during continued training, while Compresso~\cite{guo2023compresso} and PAT~\cite{liu2025pat} combine mask learning with LoRA-based fine-tuning. Compared with mask-only optimization, these approaches typically require orders of magnitude more training compute due to more trainable parameters and larger token budgets. For semi-structured setting, AST \cite{Huang_Hu_Jian_Zhu_Chen_2025} and CAST \cite{huang2025cast} achieve lossless 2:4 models on LLaMA2 and LLaMA3 models.

\paragraph{Mask-only Optimization}
Relatively few works focus on mask-only optimization. Recently, MaskLLM~\cite{fang2024maskllm} explores this setting for $N\!:\!M$ sparsity using modified hard-concrete relaxations~\cite{louizos2018learning}.

\section{Experiments}

\subsection{Experimental Settings}

\paragraph{Models and Setup}
We evaluate our pruning method on a diverse set of model families. We report results on LLaMA \cite{touvron2023llama} and Qwen3 \cite{yang2025qwen3technicalreport} model families as dense baselines. For MoE baselines, we include DeepSeekMoE-16B \cite{dai2024deepseekmoeultimateexpertspecialization} and Qwen3-30B-A3B \cite{yang2025qwen3technicalreport}. We define sparsity ratio $\alpha =1-\rho$ as the fraction of pruned \emph{units} (attention heads and MLP channels for dense models; expert channels for MoE models).

\paragraph{Evaluation}
We evaluate on a broad suite of downstream benchmarks. For dense models, we report WikiText-2~\cite{merity2017pointer} perplexity. For MoE models, since some baselines are calibrated on WikiText, we instead evaluate perplexity on C4~\cite{raffel2020exploring} to avoid potential train--test overlap. For zero-shot evaluation, we use EleutherAI's LM Evaluation Harness~\cite{gao2021framework} on ARC-Easy and ARC-Challenge~\cite{Clark2018ThinkYH}, OpenBookQA~\cite{mihaylov2018can}, WinoGrande~\cite{sakaguchi2021winogrande}, PIQA~\cite{Bisk2020}, HellaSwag~\cite{zellers2019hellaswag}, MathQA~\cite{amini2019mathqa}, RTE~\cite{wang2018glue}, and BoolQ~\cite{clark2019boolqexploringsurprisingdifficulty}.

\paragraph{Implementation Details}
Unless otherwise specified, we train masks on a 30M-token subset of FineWeb-Edu~\cite{penedo2024fineweb}. Experiments are run on 4 NVIDIA H20 GPUs. Hyperparameters are robust across different model families and sparsity ratios, so we use the same configuration for all runs. Since sparsity is enforced through a training objective, the achieved sparsity can deviate slightly from the target; in all experiments, we keep the difference within 1\%. Additional details are provided in \cref{app:additional_implment}.

\paragraph{Baseline Selection}
We focus on strong one-shot structured pruning baselines for fair and practical comparison. For dense LLMs, most one-shot methods rely on post-pruning fine-tuning with LoRA to recover accuracy; we therefore include LoRAP~\cite{li2024lorap}, LoRAPrune~\cite{zhang2024loraprune}, and SlimLLM~\cite{guoslimllm}, and report their results under a matched tuning token budget comparable to our mask-learning cost (30M tokens). For MoE models, prior structured pruning baselines are typically training-free and do not apply fine-tuning; accordingly, we compare against the strongest reported training-free methods, including NAEE~\cite{lu2024not}, $D^2$-MoE~\cite{gudelta}, Camera-P~\cite{xu2025camera}, and HEAPr~\cite{li2025heapr}. We further include comparison with hard-concrete relaxation in the ablation section. We do not compare against weight updating training methods, as they typically use substantially more compute (e.g., $\sim$50B tokens for ShearedLLaMA~\cite{xia2024sheared}), making them less comparable to our lightweight setting. In practice, our method requires only $\sim$40 minutes per run for LLaMA-7B and $\sim$60 minutes for DeepSeekMoE-16B.

\subsection{Main Results}

\subsubsection{Dense LLMs}
\cref{table:dense_main} reports pruning results on dense LLaMA models using WikiText-2 perplexity and mean zero-shot accuracy. Across model sizes and sparsity levels, DDP achieves the best or near-best performance. On LLaMA-7B, it improves mean accuracy from 62.41 to 64.13 at 20\% sparsity and from 53.16 to 56.07 at 50\%, while also reducing perplexity. On LLaMA-13B, the gains are larger under aggressive pruning, improving mean accuracy by $>7$ points at 50\% sparsity over baselines. Overall, these results highlight the benefit of directly optimizing masks, compared to one-shot pruning approaches followed by fine-tuning.

\subsubsection{Mixture-of-Experts LLMs}
\cref{table:moe_main} summarizes MoE pruning results using C4 perplexity and mean zero-shot accuracy. Across both DeepSeekMoE-16B and Qwen3-30B-A3B, Deterministic Differentiable Pruning (DDP) achieves the best accuracy and lowest perplexity at every sparsity level. The advantage grows with sparsity: on DeepSeekMoE-16B at 60\% sparsity, we outperform the strongest baseline by +6.6 mean-accuracy points (58.18 vs.\ 51.62) while substantially reducing perplexity (12.65 vs.\ 18.10). Qwen3-30B-A3B exhibits the same trend, with DDP remaining notably more stable under aggressive pruning. Full results as well as performance on additional models are provided in \cref{app:full_results}.

\subsection{Ablation Study}

\begin{table}[t]
  \centering
  \caption{\textbf{Ablation on mask-learning components at 20\% sparsity.} We report perplexity and zero-shot mean accuracy on LLaMA-7B and DeepSeekMoE-16B. HC: stochastic hard-concrete; Det.\ HC: deterministic hard-concrete; +EM: expanded-mask parameterization. \textbf{Bold} indicates the best performance among pruned variants at the same sparsity.}
  \label{tab:main_ablation}
  \small
  \setlength{\tabcolsep}{5.5pt}
  \begin{tabular}{lcc|cc}
    \toprule
    & \multicolumn{2}{c|}{\textbf{LLaMA-7B}} & \multicolumn{2}{c}{\textbf{DeepSeekMoE-16B}} \\
    \cmidrule(r){2-3}\cmidrule(l){4-5}
    \textbf{Method} & \textbf{PPL}$\downarrow$ & \textbf{Mean Acc}$\uparrow$
                    & \textbf{PPL}$\downarrow$ & \textbf{Mean Acc}$\uparrow$ \\
    \midrule
    Dense (0\%)              & 12.62 & 65.96 & 9.05 & 62.28 \\
    \midrule
    HC                       & 16.52 & 59.95 & 9.55 & 61.58 \\
    Det.\ HC                 & 16.30 & 61.74 & 9.52 & 61.65 \\
    Det.\ HC + EM            & 15.36 & 63.92 & 9.42 & 61.80 \\
    \textbf{Ours}            & \textbf{15.20} & \textbf{64.13} & \textbf{9.38} & \textbf{61.84} \\
    \bottomrule
  \end{tabular}
  \vspace{-0.1in}
\end{table}

\subsubsection{Ablation on Individual Components}
We conduct an ablation study to quantify the contribution of each component in our method. Relative to the standard hard-concrete relaxation, our approach introduces three key changes: (i) removing the stochastic term and enabling deterministic mask optimization with binarization loss; (ii) using ReLU gating in the forward pass for an expanded, more expressive mask parameterization; and (iii) the use of knowledge distillation.

\cref{tab:main_ablation} quantifies the contribution of each design choice at 20\% sparsity on LLaMA-7B and DeepSeekMoE-16B. Starting from hard-concrete (HC), we obtain a deterministic variant by removing the sampling noise $\boldsymbol{u}$ in \cref{eq:hard_concrete} during the forward pass and using the deterministic mask value in the regularizer, together with a binary regularizer to encourage discrete masks. This change yields consistent gains on both dense and MoE models, suggesting that sampling noise can hinder stable mask optimization. Introducing the expanded mask parameterization (+EM) further improves perplexity and accuracy, highlighting the benefit of decoupling forward gating from sparsity control. Finally, combining deterministic optimization, expanded masks, and distillation achieves the best performance across both models, substantially narrowing the gap to dense baselines and indicating that these components contribute additively.

\subsubsection{Sparsity Granularities}
We primarily report results under global sparsity for comparability with prior work; here we summarize the impact of using alternative granularities in \cref{tab:granularity_ablation}. For dense models, we evaluate layer-wise uniform sparsity, while for MoE models we consider both layer-wise and expert-wise sparsity. Layer-wise sparsity causes only a small degradation on LLaMA-7B, but both alternatives lead to larger drops on DeepSeekMoE-16B, with expert-wise sparsity being the most restrictive, suggesting that global sparsity is particularly well-suited for MoE models because it can adapt to highly skewed expert activation frequencies, preserving frequently used experts while pruning rarely routed ones more aggressively. Overall, global sparsity yields the best accuracy, while finer-grained constraints provide a practical trade-off when hardware-friendly or deployment-specific sparsity patterns are required.

\begin{table}[t]
  \centering
  \caption{\textbf{Sparsity granularity ablation at 20\% sparsity.} We report the change in perplexity and zero-shot mean accuracy relative to global sparsity for LLaMA-7B and DeepSeekMoE-16B.}
  \label{tab:granularity_ablation}
  \small
  \setlength{\tabcolsep}{6pt}
    \resizebox{0.45\textwidth}{!}{%
  \begin{tabular}{l c|cc}
    \toprule
    & \textbf{LLaMA-7B} & \multicolumn{2}{c}{\textbf{DeepSeekMoE-16B}} \\
    \cmidrule(r){2-2}\cmidrule(l){3-4}
    \textbf{Metric} & \textbf{Layer-wise} & \textbf{Layer-wise} & \textbf{Expert-wise} \\
    \midrule
    $\Delta$ Perplexity $\downarrow$   & +0.04 & +0.47 & +1.17 \\
    $\Delta$ Mean Acc.\ (pp) $\uparrow$ & -0.48 & -1.09 & -1.97 \\
    \bottomrule
  \end{tabular}
  }
  \vspace{-0.08in}
\end{table}

\subsubsection{Tokens}
We study how the token budget for mask learning affects performance. We run our method on LLaMA-7B and DeepSeekMoE-16B at 20\% target sparsity and track both perplexity and zero-shot accuracy as we increase the number of training tokens. As shown in \cref{fig:token_ablation}, both metrics converge within a 60M-token budget, which we attribute to the low-dimensional nature of structured sparsity decisions. Since mask learning saturates quickly with minimal token budget, additional gains can be obtained by seamlessly transitioning to continued training on the pruned model. Notably, zero-shot performance recovers rapidly and saturates within $\sim$10M tokens for both models, whereas perplexity improves more gradually and continues to decrease up to 60M tokens. We attribute this gap to knowledge distillation: it provides task-agnostic guidance that quickly preserves high-level capabilities and downstream generalization, while further perplexity reductions benefit from longer optimization to better match token-level distributions.

\begin{figure}[t]
    \centering
    \begin{subfigure}[b]{0.44\textwidth}
        \centering
        \includegraphics[width=\textwidth]{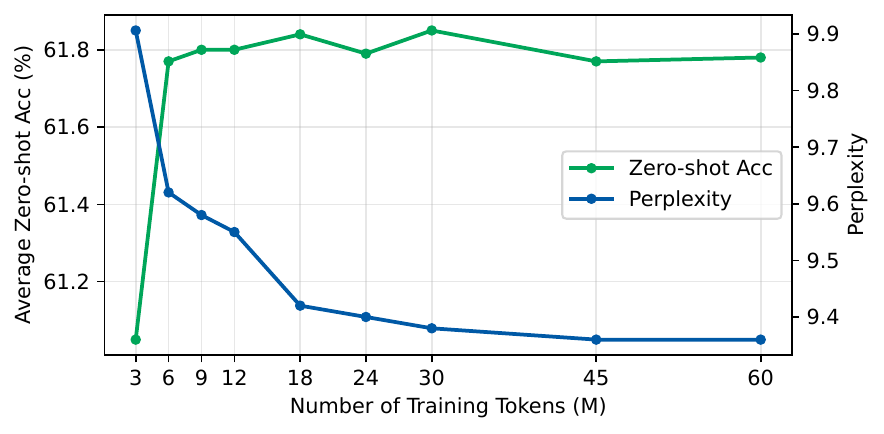}
        \caption{DeepSeekMoE-16B at 20\% sparsity.}
    \end{subfigure}
    \hfill
    \begin{subfigure}[b]{0.44\textwidth}
        \centering
        \includegraphics[width=\textwidth]{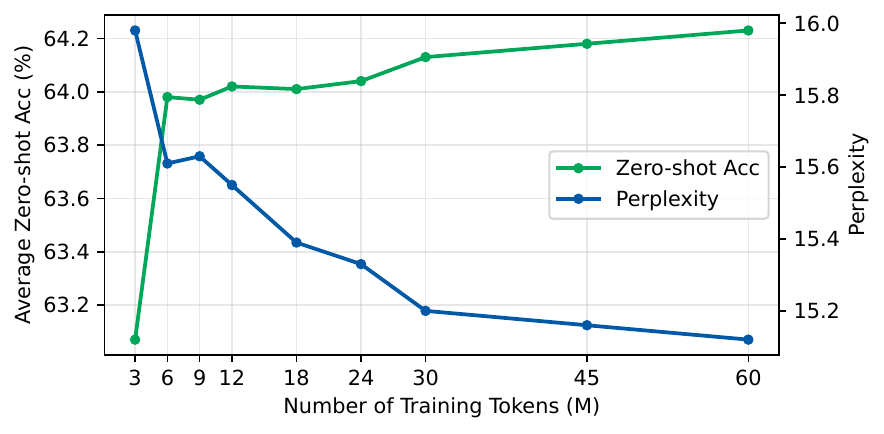}
        \caption{LLaMA-7B at 20\% sparsity.}
    \end{subfigure}
    \caption{Effect of different training tokens on perplexity and zero-shot mean accuracy on different models.}
    \label{fig:token_ablation}
      \vspace{-2ex}
\end{figure}

\subsubsection{Datasets}

\begin{table}[t]
  \centering
  \caption{\textbf{Dataset ablation at 20\% sparsity.} We report the change in WikiText-2 perplexity and zero-shot mean accuracy relative to training masks on our default dataset (FineWeb-Edu) for LLaMA-7B and DeepSeekMoE-16B. Negative $\Delta$PPL and positive $\Delta$Acc indicate improvement.}
  \label{tab:dataset_ablation}
  \small
  \setlength{\tabcolsep}{6pt}
  \resizebox{0.48\textwidth}{!}{%
  \begin{tabular}{lcc|cc}
    \toprule
    & \multicolumn{2}{c|}{\textbf{LLaMA-7B}} & \multicolumn{2}{c}{\textbf{DeepSeekMoE-16B}} \\
    \cmidrule(r){2-3}\cmidrule(l){4-5}
    \textbf{Metric} & \textbf{C4} & \textbf{LaMini} & \textbf{C4} & \textbf{LaMini} \\
    \midrule
    $\Delta$ WikiText-2 PPL $\downarrow$ & +0.35 & -0.10 & +0.21 & +0.11 \\
    $\Delta$ Mean Acc.\ (pp) $\uparrow$  & -1.14 & -0.53 & -0.68 & -0.87 \\
    \bottomrule
  \end{tabular}
  }
  \vspace{-4ex}
\end{table}

We ablate the dataset used for mask learning by training on either a weaker general pretraining corpus (C4) or an instruction-tuned corpus (LaMini). As shown in \cref{tab:dataset_ablation}, both alternatives generally degrade performance, suggesting that dataset mismatch can harm the learned sparsity pattern. In particular, instruction-style data may bias masks toward behaviors that do not transfer to pretraining-style evaluation. Overall, these results indicate that high-quality pretraining-like data is preferred for mask learning.

\begin{table}[t]
\centering
\small
\caption{Comparison between mask-only optimization and training-free pruning followed by LoRA recovery. PPL is evaluated on WikiText-2 for LLaMA-2-7B and on C4 for DeepSeekMoE-16B. DDP uses 30M mask-optimization tokens and keeps pretrained weights frozen. }
\label{tab:lora_recovery}
  \resizebox{0.49\textwidth}{!}{%
\begin{tabular}{llccc}
\toprule
Model & Method & Tokens & PPL $\downarrow$ & Mean Acc. $\uparrow$ \\
\midrule

\multirow{5}{*}{LLaMA-2-7B}
& LoRAP & 0M   & 15.02 & 59.44 \\
& LoRAP & 30M  & 14.69 & 61.24 \\
& LoRAP & 60M  & 14.48 & 62.77 \\
& LoRAP & 120M & 14.46 & 62.98 \\
& DDP (Ours) & 30M & \textbf{14.39} & \textbf{64.82} \\
\midrule

\multirow{5}{*}{DeepSeekMoE}
& HEAPr & 0M  & 9.85 & 60.62 \\
& HEAPr & 30M & 9.68 & 60.79 \\
& HEAPr & 60M  & 9.56 & 60.85 \\
& HEAPr & 120M & 9.47 & 60.89 \\
& DDP (Ours) & 30M & \textbf{9.38} & \textbf{61.84} \\
\bottomrule
\end{tabular}
}
\end{table}

\begin{table*}[t]
\centering
\caption{Additional comparison with T\'yr-the-Pruner on dense models under different sparsity ratios. We report WikiText-2 perplexity and mean zero-shot accuracy. Bold indicates the best result at the same model and sparsity ratio.}
\label{tab:additional_tyr_results}
\resizebox{0.99\textwidth}{!}{
\begin{tabular}{llcccccccc}
\toprule
\multirow{2}{*}{\textbf{Model}} & \multirow{2}{*}{\textbf{Method}}
& \multicolumn{2}{c}{\textbf{12.5\%}}
& \multicolumn{2}{c}{\textbf{25\%}}
& \multicolumn{2}{c}{\textbf{37.5\%}}
& \multicolumn{2}{c}{\textbf{50\%}} \\
\cmidrule(lr){3-4} \cmidrule(lr){5-6} \cmidrule(lr){7-8} \cmidrule(lr){9-10}
& & \textbf{Wiki PPL} $\downarrow$ & \textbf{Mean Acc.} $\uparrow$
  & \textbf{Wiki PPL} $\downarrow$ & \textbf{Mean Acc.} $\uparrow$
  & \textbf{Wiki PPL} $\downarrow$ & \textbf{Mean Acc.} $\uparrow$
  & \textbf{Wiki PPL} $\downarrow$ & \textbf{Mean Acc.} $\uparrow$ \\
\midrule

\multirow{2}{*}{LLaMA-2-7B}
& T\'yr-the-Pruner & 5.84 & 56.98 & 7.51 & 54.64 & 10.29 & 52.21 & 16.17 & 47.41 \\
& DDP (Ours)      & \textbf{5.72} & \textbf{61.99} & \textbf{6.87} & \textbf{60.29} & \textbf{8.17} & \textbf{55.98} & \textbf{10.34} & \textbf{52.17} \\
\midrule

\multirow{2}{*}{LLaMA-2-13B}
& T\'yr-the-Pruner & \textbf{5.03} & 62.66 & 5.79 & 61.16 & 7.17 & 58.67 & 9.59 & 54.58 \\
& DDP (Ours)      & \textbf{5.03} & \textbf{65.62} & \textbf{5.75} & \textbf{64.51} & \textbf{6.65} & \textbf{60.88} & \textbf{8.14} & \textbf{58.43} \\
\midrule

\multirow{2}{*}{Mistral-7B-v0.3}
& T\'yr-the-Pruner & 5.61 & 63.05 & 7.08 & 60.22 & 10.25 & 52.34 & 15.53 & 46.21 \\
& DDP (Ours)      & \textbf{5.59} & \textbf{66.31} & \textbf{6.73} & \textbf{62.85} & \textbf{7.80} & \textbf{58.39} & \textbf{9.88} & \textbf{50.87} \\
\midrule

\multirow{2}{*}{Mistral-Nemo}
& T\'yr-the-Pruner & 6.31 & 64.15 & 7.87 & 60.61 & 11.47 & 54.63 & 16.85 & 47.92 \\
& DDP (Ours)      & \textbf{6.25} & \textbf{67.23} & \textbf{7.55} & \textbf{63.99} & \textbf{9.39} & \textbf{58.95} & \textbf{12.07} & \textbf{53.53} \\
\bottomrule
\end{tabular}
}
\end{table*}

\subsection{Mask-Only Optimization vs. LoRA Recovery}
\label{sec:mask_only_lora}

We further compare DDP with a common two-stage alternative: training-free pruning followed by LoRA fine-tuning. We use LoRAP and HEAPr as representative baselines for dense and MoE models, respectively, and evaluate LoRA recovery under the same data and evaluation settings with different token budgets. This comparison examines whether post-pruning LoRA fine-tuning can compensate for a suboptimal sparse structure, or whether directly optimizing the masks is more effective.

As shown in Table~\ref{tab:lora_recovery}, LoRA recovery improves training-free pruned models as the token budget increases, but the gains remain limited and saturate below DDP. On LLaMA-2-7B, LoRAP improves from 59.44 to 62.98 mean accuracy with 120M recovery tokens, yet still trails DDP's 64.82 using only 30M mask-optimization tokens. Similarly, on DeepSeekMoE-16B, HEAPr gains only marginally even with 120M recovery tokens and remains worse than DDP in both perplexity and accuracy. These results suggest that the main bottleneck is the quality of the initial sparse structure: LoRA can adapt the remaining network after pruning, but cannot revise poor pruning decisions. In contrast, DDP directly optimizes the sparse support under the target budget while keeping pretrained weights frozen.

\subsection{Optimal Sparsity Pattern}

Our method automatically discovers interpretable structured sparsity patterns. For dense models, sparsity tends to concentrate in later layers, and attention shows higher cross-layer variance than MLPs, suggesting greater redundancy in multi-head attention. For MoE models, sparsity varies substantially across experts: rarely routed experts are pruned more aggressively, indicating a negative correlation between activation frequency and pruning ratio. The learned mask values are also diverse: they are centered around 1, but include values close to 0 and larger than 1, showing that DDP's expanded mask parameterization can both prune unimportant components and rescale retained ones. These trends align with prior heuristic observations and the conjecture that later layers contribute less to the output distribution~\cite{gupta2025llms}, while MoE capacity is unevenly distributed across experts. Detailed results are provided in \cref{app:sparsity_analysis}.

\subsection{Additional Results}

Table~\ref{tab:additional_tyr_results} reports additional comparisons with T\'yr-the-Pruner \cite{Li2025TyrThePruner} across more model families and sparsity levels. DDP consistently improves perplexity and downstream accuracy, with larger gains under higher sparsity. These results further demonstrate the robustness and applicability of DDP across model families and pruning ratios.

\subsection{Computational Cost}
\label{sec:computational_cost}

\begin{table}[t]
\centering
\small
\caption{Per-step training cost comparison on LLaMA2-7B with global batch size 16 and sequence length 2,048. Memory denotes the sum of activation memory, model memory, and gradient/optimizer memory. For LoRA, we use rank $r=8$.}
\label{tab:computational_cost}
  \resizebox{0.49\textwidth}{!}{%
\begin{tabular}{lcccc}
\toprule
Method & Params & Memory & FLOPs & Time / Step \\
\midrule
DDP & 0.35M & 18 GB & 0.97E & 2.1 s \\
DDP (distill) & 0.35M & 19 GB & 1.254E & 2.5 s \\
LoRA  & 20M & 18 GB & 0.97E & 2.4 s \\
Full FT & 7B & 83 GB & 1.40E & 2.9 s \\
\bottomrule
\end{tabular}
}
\end{table}

We compare the per-step training cost of DDP against LoRA and full fine-tuning on LLaMA-7B. All methods use a global batch size of 16, a per-device batch size of 2 with gradient accumulation 2, sequence length 2,048, and activation checkpointing. We report trainable parameters, total memory, training FLOPs, and wall-clock time per step. As shown in Table~\ref{tab:computational_cost}, DDP achieves the lowest per-step time while updating only $0.35$M mask parameters. The distillation variant adds only a modest overhead from the no-gradient teacher forward pass, without introducing additional trainable parameters or optimizer states. LoRA incurs extra cost from the low-rank adapter projections and their backward pass. As a result, DDP is faster than LoRA without distillation and remains comparable in wall-clock time with distillation, while using far fewer trainable parameters.

\subsection{Speedup}

\begin{table}[t]
  \centering
  \caption{End-to-end speedup results on dense models. We report Throughput (requests per second) and speedup ratio.}

  \label{tab:speedup_dense}
  \small
  \setlength{\tabcolsep}{5.5pt}
    \resizebox{0.49\textwidth}{!}{%
  \begin{tabular}{cccccc}
    \toprule
     \textbf{Model}  & GPU & & 0\% &  20\% & 50\%  \\
       \midrule
    \multirow{2}{*}{LLaMA-7B}& \multirow{2}{*}{RTX 5090} & Through. & 10.88 & 14.75  & 23.98   \\

    &  & Speedup &  1.00×  & 1.36x  & 2.20x   \\

           \midrule
    \multirow{2}{*}{LLaMA-13B}& \multirow{2}{*}{B200} & Through. & 27.20 & 32.14  & 38.56   \\
     &  & Speedup &  1.00×  & 1.18x  & 1.42x   \\
    \bottomrule
  \end{tabular}
  }
  \vspace{-0.08in}
\end{table}

\begin{table}[t]
  \centering
  \caption{End-to-end speedup results on MoE models. We report Throughput (requests per second) and speedup ratio.}
    \label{tab:speedup_moe}
  \resizebox{0.49\textwidth}{!}{%
  \small
  \setlength{\tabcolsep}{5.5pt}
  \begin{tabular}{ccccccc}
    \toprule
     \textbf{Model}  & GPU & & 0\% &  20\% & 40\% & 60\% \\
       \midrule
    \multirow{2}{*}{\makecell{Qwen3\\30B-A3B}}& \multirow{2}{*}{B200} & Through. & 29.86 & 33.25  & 37.42 &  45.16  \\

    &  & Speedup &  1.00× & 1.11x & 1.25x & 1.51x   \\
    \bottomrule
  \end{tabular}
  }
  \vspace{-0.08in}
\end{table}

We measure end-to-end speedups with \texttt{vLLM}~\cite{kwon2023efficient} on 1{,}000 ShareGPT prompts and show the results in \cref{tab:speedup_dense,tab:speedup_moe}. Speedups grow with sparsity and are more pronounced on memory-constrained devices. On RTX 5090, pruning LLaMA-7B achieves $1.36\times$ speedup at 20\% sparsity and $2.20\times$ at 50\% sparsity. On B200, gains are smaller but remain consistently positive. For MoE, Qwen3-30B-A3B on B200 scales well with expert pruning, reaching $1.11\times$, $1.25\times$, and $1.51\times$ speedup at 20\%, 40\%, and 60\% sparsity, respectively. More implementation details are provided in \cref{app:additional_implment}.

\section{Conclusion}

In this paper, we formulate structured pruning as an $\ell_0$-constrained mask optimization problem and propose a deterministic, mask-only approach that improves convergence stability and pruning quality with minimal compute. Across both dense and MoE LLMs, our method consistently outperforms prior pruning baselines by a large margin, and we further demonstrate practical deployability via end-to-end speedups. Future work will explore continued training to further close the accuracy gap under higher sparsity.

\section*{Acknowledgements}

The authors thank Siying Tao for assistance with figure preparation and Litong Deng for helpful discussions on the mathematical proofs. This work was supported by Ant Group, and we thank colleagues and collaborators at Ant Group for valuable discussions and technical support. This work was supported by the Fundamental and Interdisciplinary Disciplines Breakthrough Plan of the Ministry of Education of China (No. JYB2025XDXM101); the NSFC Projects (Nos. 62595773, 62376131, 62550004, 92270001). J.Z is also supported by the XPlorer Prize.

\section*{Impact Statement}

This paper presents work whose goal is to advance the field of Machine Learning. There are many potential societal consequences of our work, none of which we feel must be specifically highlighted here.


\bibliography{example_paper}
\bibliographystyle{icml2026}

\newpage
\appendix
\onecolumn

\section{Algorithm}
\label{app:algorithm}

\begin{algorithm}[h]
  \caption{Deterministic Differentiable Pruning: Structured Pruning via Mask-Only Optimization}
  \label{alg:lightningprune}
  \begin{algorithmic}[1]
    \STATE {\bfseries Input:} target keep ratio $\rho$; total iterations $T$; number of heads $H$; MLP width $C$; number of layers $L$; prunable module types $\mathcal{N}$ (dense: $\{\text{attn},\text{mlp}\}$; MoE: $\{\text{expert}\}$); distillation weight $\eta$
    \STATE {\bfseries Initialize:} for each $\tau\in\mathcal{N}$, initialize mask parameters $\boldsymbol{z}^{\tau}\leftarrow \mathbf{1}$ and multipliers $\lambda_1^{\tau}\leftarrow 0$, $\lambda_2^{\tau}\leftarrow 0$, $\lambda_3^{\tau}\leftarrow 0$.
    \FOR{$t=1,2,\ldots,T$}
      \STATE Compute progress $p_t \leftarrow t/T$ and update mapping sharpness $\mu_t \leftarrow \mu_0 - (\mu_0 -\mu_T)\sqrt{p_t}$
      \FOR{$\tau \in \mathcal{N}$}
        \STATE Compute forward masks $\boldsymbol{m}^{\tau}\leftarrow \mathrm{ReLU}(\boldsymbol{z}^{\tau})$ 
        \STATE Compute retention scores $\boldsymbol{s}^{\tau}$ from $\boldsymbol{z}^{\tau}$ \hfill (Eq.~\eqref{eq:lp_project})
        \STATE Compute $\mathcal{L}^{\tau}_{\text{sparsity}}$ and $\mathcal{L}^{\tau}_{\text{bin}}$ \hfill (Eqs.~\eqref{eq:lp_sparsity_loss}, \eqref{eq:lp_binary})
      \ENDFOR
      \STATE Forward dense teacher and masked student to obtain output distributions $P_t$ and $P_s$ respectively, compute $\mathcal{L}_{\mathrm{kl}}$ \hfill (Eq.~\eqref{eq:kl_loss})
      \STATE Form $\mathcal{L}_{\text{total}} = \mathcal{L}_{\mathrm{ce}}(\theta,\boldsymbol{m})+
      \eta \mathcal{L}_{\mathrm{kl}}(\theta,\boldsymbol{m})+
    \mathcal{L}_{\mathrm{sparsity}}(\boldsymbol{s})+
    \mathcal{L}_{\mathrm{bin}}(\boldsymbol{s})$ 
      \FOR{$\tau \in \mathcal{N}$}
        \STATE Update $\boldsymbol{z}^{\tau}$ by gradient descent on $\mathcal{L}_{\text{total}}$.
        \STATE Update $\lambda_1^{\tau},\lambda_2^{\tau},\lambda_3^{\tau}$ by gradient ascent.
      \ENDFOR
    \ENDFOR
    \STATE {\bfseries Pruning:} remove components whose corresponding mask value is 0 in $\boldsymbol{m}^{\tau}$ and merge nonzero mask values into model weights for each $\tau\in\mathcal{N}$, yielding the final pruned model.
  \end{algorithmic}
\end{algorithm}

\section{Theoretical Discussion}
\label{sec:theory_appendix}

This section provides formal justification for the theoretical claims discussed in \cref{sec:method}.
We study \emph{Deterministic Differentiable Pruning (DDP)} by (i) relating its relaxed keep budget constraint to the discrete $\ell_0$ active-set constraint, (ii) characterizing the effect of the binarization regularizer, and (iii) establishing a standard KKT-type convergence guarantee for deterministic augmented Lagrangian optimization on the (STE-induced) constrained surrogate. These ingredients are then combined to justify recovery of the exact hard $P$-budget in the annealed limit.

\paragraph{Discrete $P$-budget formulation.}
DDP parameterizes structured masks by logits $\boldsymbol{z}\in\mathbb{R}^K$ and applies a ReLU gate in the forward pass,
$\boldsymbol{m}(\boldsymbol{z})=\mathrm{ReLU}(\boldsymbol{z})$.
Under this parameterization, enforcing an exact keep budget is equivalent to constraining the number of positive logits.
We therefore define the discrete hard budget problem as
\begin{equation}
\label{eq:l0_discrete_main_app_1}
\min_{\boldsymbol{z}}\; J(\boldsymbol{z})
\quad\text{s.t.}\quad
\sum_{k=1}^{K}\mathbb{I}[z_k>0]=P,
\end{equation}
where $P$ is the target number of active components. In terms of the keep ratio $\rho$, we set $P=[ \rho K ]$
(or $P=\rho K$ when $\rho K$ is an integer). Throughout, $J(z)$ can be any \emph{deterministic} training objective; for example,
\[
J(\boldsymbol{z})\;=\;\mathcal{L}_{\mathrm{ce}}\big(\theta,\boldsymbol{m}(\boldsymbol{z})\big)\;+\;\eta\,\mathcal{L}_{\mathrm{kl}}\big(\theta,\boldsymbol{m}(\boldsymbol{z})\big),
\]
with an optional distillation term $\mathcal{L}_{\mathrm{kl}}$.

\paragraph{Relaxed keep budget.}
To obtain a differentiable surrogate for the discrete active-set count, DDP uses the deterministic surrogate mapping
$s_k=\phi(z_k;\mu)\in[0,1]$ (cf.\ \cref{eq:lp_project}) and defines the relaxed keep budget
\begin{equation}
\label{eq:relaxed_budget_app}
S_\mu(\boldsymbol{z})=\sum_{k=1}^{K}\phi(z_k;\mu),
\qquad
c_\mu(\boldsymbol{z})\triangleq S_\mu(\boldsymbol{z})-P.
\end{equation}
We emphasize that $S_\mu$ is used only for enforcing the budget constraint (via an augmented Lagrangian), while the forward mask is
$\boldsymbol{m}(\boldsymbol{z})=\mathrm{ReLU}(\boldsymbol{z})$.

\subsection{A basic property of the deterministic surrogate mapping}
\label{sec:phi_property}

We start with a lightweight fact about $\phi(\cdot;\mu)$ that underlies the hard budget recovery argument:
as $\mu\downarrow 0$, the relaxed keep budget $S_\mu(\boldsymbol{z})$ approaches the discrete count $\sum_k \mathbb{I}[z_k>0]$ whenever
the logits are separated from the ReLU threshold by a margin.

\begin{lemma}[Margin-separated relaxation approaches the discrete count]
\label{lem:phi_margin}
Assume $\phi(\cdot;\mu)$ is nondecreasing in $z$ for each $\mu>0$ and satisfies pointwise convergence
$\phi(z;\mu)\to \mathbb{I}[z>0]$ as $\mu\downarrow 0$ for all $z\neq 0$ (as induced by \cref{eq:lp_project}).
Fix $\delta>0$ and define the active set $\mathcal{A}(\boldsymbol{z})=\{k:z_k>0\}$.
Then there exists $\bar{\mu}(\delta)>0$ such that for all $\mu\in(0,\bar{\mu}(\delta))$ and all $\boldsymbol{z}$ obeying the margin condition
\[
z_k\ge \delta \ \ \text{for } k\in\mathcal{A}(\boldsymbol{z}),
\qquad
z_k\le -\delta \ \ \text{for } k\notin\mathcal{A}(\boldsymbol{z}),
\]
we have
\[
\big|S_\mu(\boldsymbol{z})-|\mathcal{A}(\boldsymbol{z})|\big|\;\le\;\tfrac{1}{2}.
\]
In particular, since both $S_\mu(\boldsymbol{z})$ and $|\mathcal{A}(\boldsymbol{z})|$ are integers whenever $S_\mu(\boldsymbol{z})=P$, this implies
$S_\mu(\boldsymbol{z})=P \Rightarrow |\mathcal{A}(\boldsymbol{z})|=P$ for sufficiently small $\mu$ under the margin condition.
\end{lemma}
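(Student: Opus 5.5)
The plan is to turn pointwise convergence at the two points $z=\pm\delta$ into a bound that holds uniformly over all margin-separated $\boldsymbol{z}$, using monotonicity, and then add up the $K$ per-coordinate errors. Fix $\delta>0$. Since $\phi(\cdot;\mu)$ is nondecreasing, every $k\in\mathcal{A}(\boldsymbol{z})$ with $z_k\ge\delta$ satisfies
\[
\phi(\delta;\mu)\le \phi(z_k;\mu)\le 1 .
\]
Likewise every $k\notin\mathcal{A}(\boldsymbol{z})$ with $z_k\le-\delta$ satisfies
\[
0\le \phi(z_k;\mu)\le \phi(-\delta;\mu).
\]
Here I use that $\phi$ takes values in $[0,1]$, which follows from the clamp in \cref{eq:lp_project}. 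The point is that the per-coordinate errors are controlled by only two numbers, $1-\phi(\delta;\mu)$ and $\phi(-\delta;\mu)$, and neither depends on $\boldsymbol{z}$.

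Next I apply the pointwise convergence hypothesis at $z=\delta$ and $z=-\delta$, which gives $\phi(\delta;\mu)\to1$ and $\phi(-\delta;\mu)\to0$ as $\mu\downarrow0$. I then choose $\bar\mu(\delta)>0$ so that for all $\mu\in(0,\bar\mu(\delta))$,
\[
1-\phi(\delta;\mu)\le \tfrac{1}{2K},\qquad \phi(-\delta;\mu)\le\tfrac{1}{2K}.
\]
Writing $S_\mu(\boldsymbol{z})-|\mathcal{A}(\boldsymbol{z})|=\sum_{k\in\mathcal{A}}(\phi(z_k;\mu)-1)+\sum_{k\notin\mathcal{A}}\phi(z_k;\mu)$, each of the $K$ terms has absolute value at most $1/(2K)$. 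The triangle inequality then gives $|S_\mu(\boldsymbol{z})-|\mathcal{A}(\boldsymbol{z})||\le 1/2$. The threshold $\bar\mu$ depends on $K$ as well as $\delta$; this is harmless because $K$ is fixed throughout.

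For the concrete mapping \cref{eq:lp_project} I would also record a sharper statement. Since $s(0)=0$ and $s(2\mu)=1$, and $\phi$ is clamped and monotone, we have $\phi(z;\mu)=0$ for $z\le0$ and $\phi(z;\mu)=1$ for $z\ge2\mu$. So $\bar\mu(\delta)=\delta/2$ works and gives exact equality $S_\mu(\boldsymbol{z})=|\mathcal{A}(\boldsymbol{z})|$. This also confirms that the monotonicity and pointwise-convergence hypotheses of the lemma hold for the map the paper actually uses.

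For the final claim, suppose $S_\mu(\boldsymbol{z})=P$. Then $P$ and $|\mathcal{A}(\boldsymbol{z})|$ are integers whose distance is at most $1/2$, so they are equal. The only delicate step is the uniformity over $\boldsymbol{z}$. Pointwise convergence alone would fail for logits approaching $0$, which is exactly why the margin $\delta$ is needed. Monotonicity handles this by reducing everything to the two evaluation points $\pm\delta$, so I expect no real obstacle beyond stating that reduction carefully.
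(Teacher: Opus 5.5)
Your proof is correct and matches the paper's argument: monotonicity reduces everything to the two points $\pm\delta$, pointwise convergence gives $\bar\mu(\delta)$ with a per-coordinate tolerance of order $1/K$ (the paper uses $1/(4K)$, you use $1/(2K)$), and summing over the $K$ coordinates gives the bound. You also make explicit the range fact $\phi\in[0,1]$, which the paper uses only implicitly, and your remark on the concrete map is a correct sharpening: since $\phi(z;\mu)=0$ for $z\le 0$ and $\phi(z;\mu)=1$ for $z\ge 2\mu$, the choice $\bar\mu(\delta)=\delta/2$ gives exact equality $S_\mu(\boldsymbol{z})=|\mathcal{A}(\boldsymbol{z})|$.
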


\begin{proof}
By pointwise convergence, for the fixed $\delta>0$ there exists $\bar{\mu}(\delta)>0$ such that for all $\mu\in(0,\bar{\mu}(\delta))$,
\[
\phi(\delta;\mu)\ge 1-\frac{1}{4K},
\qquad
\phi(-\delta;\mu)\le \frac{1}{4K}.
\]
By monotonicity, for $k\in\mathcal{A}(\boldsymbol{z})$ we have $\phi(z_k;\mu)\ge \phi(\delta;\mu)$, and for $k\notin\mathcal{A}(\boldsymbol{z})$ we have
$\phi(z_k;\mu)\le \phi(-\delta;\mu)$. Summing yields
\[
S_\mu(\boldsymbol{z})
=
\sum_{k\in\mathcal{A}(\boldsymbol{z})}\phi(z_k;\mu)+\sum_{k\notin\mathcal{A}(\boldsymbol{z})}\phi(z_k;\mu)
\in
\Big[|\mathcal{A}(\boldsymbol{z})|\Big(1-\tfrac{1}{4K}\Big),\ |\mathcal{A}(\boldsymbol{z})|+\tfrac{K-|\mathcal{A}(\boldsymbol{z})|}{4K}\Big],
\]
hence $\big|S_\mu(\boldsymbol{z})-|\mathcal{A}(\boldsymbol{z})|\big|\le \tfrac{1}{2}$.
\end{proof}

\subsection{Binarization loss}
\label{sec:binary_loss}

We next analyze the role of the binarization loss, which encourages the relaxed scores
$\boldsymbol{s}=\phi(\boldsymbol{z};\mu)$ to commit to $\{0,1\}$ under the keep-budget constraint.

\begin{proposition}[Binarization regularizer drives $\boldsymbol{s}$ toward $\{0,1\}$]
\label{prop:binarize}
Fix $\mu>0$ and recall the relaxed keep budget
$S_\mu(\boldsymbol{z})=\sum_{k=1}^{K}\phi(z_k;\mu)$ and constraint
$c_\mu(\boldsymbol{z})=S_\mu(\boldsymbol{z})-P$.
Define
\[
B(\boldsymbol{z}) \triangleq \frac{1}{K}\sum_{k=1}^{K}s_k(1-s_k),
\qquad s_k=\phi(z_k;\mu)\in[0,1],
\]
and the (scaled) binarization loss
\[
\mathcal{L}_{\mathrm{bin}}(\boldsymbol{z}) \triangleq \lambda_3\,B(\boldsymbol{z}).
\]
Consider the constrained problem
\begin{equation}
\label{eq:bin_constr}
\min_{\boldsymbol{z}}\; J(\boldsymbol{z})+\mathcal{L}_{\mathrm{bin}}(\boldsymbol{z})
\quad \text{s.t.}\quad c_\mu(\boldsymbol{z})=0,
\end{equation}
where $J(\boldsymbol{z})$ is any deterministic objective and
$\boldsymbol{m}(\boldsymbol{z})=\mathrm{ReLU}(\boldsymbol{z})$ is the forward gate.

Assume: (i) $J$ is bounded below on the feasible set $\{\boldsymbol{z}:c_\mu(\boldsymbol{z})=0\}$ by $J_{\inf}$; and
(ii) there exists a \emph{binary feasible} point $\boldsymbol{z}^{\mathrm{bin}}$ such that
$c_\mu(\boldsymbol{z}^{\mathrm{bin}})=0$ and $\boldsymbol{s}(\boldsymbol{z}^{\mathrm{bin}})\in\{0,1\}^K$
(hence $B(\boldsymbol{z}^{\mathrm{bin}})=0$ and $\mathcal{L}_{\mathrm{bin}}(\boldsymbol{z}^{\mathrm{bin}})=0$).

Let $\boldsymbol{z}_{\lambda_3}$ be any global minimizer of \eqref{eq:bin_constr}. Then
\begin{equation}
\label{eq:bin_rate_correct}
B(\boldsymbol{z}_{\lambda_3})
\le
\frac{J(\boldsymbol{z}^{\mathrm{bin}})-J_{\inf}}{\lambda_3},
\end{equation}
and the average distance of $\boldsymbol{s}$ to the binary set is bounded by
\begin{equation}
\label{eq:dist_to_binary_correct}
\frac{1}{K}\sum_{k=1}^{K}\min\{s_k,\,1-s_k\}
\;\le\;
2\,B(\boldsymbol{z}_{\lambda_3})
\;\le\;
\frac{2\big(J(\boldsymbol{z}^{\mathrm{bin}})-J_{\inf}\big)}{\lambda_3}.
\end{equation}
Consequently, as $\lambda_3\to\infty$, any sequence of minimizers $\{\boldsymbol{z}_{\lambda_3}\}$ satisfies
$B(\boldsymbol{z}_{\lambda_3})\to 0$, and the relaxed scores $\boldsymbol{s}$ become (on average) arbitrarily close to $\{0,1\}^K$.
\end{proposition}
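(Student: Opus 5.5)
The plan is a standard comparison argument for penalized minimizers, followed by an elementary pointwise inequality on $[0,1]$.

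\emph{Step 1 (comparison with the binary point).} By assumption (ii), $\boldsymbol{z}^{\mathrm{bin}}$ is feasible for \eqref{eq:bin_constr}. Since $\boldsymbol{z}_{\lambda_3}$ is a global minimizer, optimality gives
\[
J(\boldsymbol{z}_{\lambda_3})+\lambda_3 B(\boldsymbol{z}_{\lambda_3})\;\le\;J(\boldsymbol{z}^{\mathrm{bin}})+\lambda_3 B(\boldsymbol{z}^{\mathrm{bin}})\;=\;J(\boldsymbol{z}^{\mathrm{bin}}).
\]
The minimizer $\boldsymbol{z}_{\lambda_3}$ is itself feasible, so assumption (i) gives $J(\boldsymbol{z}_{\lambda_3})\ge J_{\inf}$. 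Rearranging and dividing by $\lambda_3>0$ yields \eqref{eq:bin_rate_correct}. This bound is nonnegative because $J(\boldsymbol{z}^{\mathrm{bin}})\ge J_{\inf}$.

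\emph{Step 2 (pointwise inequality).} For each $s\in[0,1]$ we show $\min\{s,1-s\}\le 2s(1-s)$. By symmetry under $s\mapsto 1-s$, it suffices to treat $s\le 1/2$. There $1-s\ge 1/2$, so $2s(1-s)\ge s=\min\{s,1-s\}$. Averaging over $k$ and applying Step 1 gives \eqref{eq:dist_to_binary_correct}.

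\emph{Step 3 (limit).} We take $J(\boldsymbol{z}^{\mathrm{bin}})$ and $J_{\inf}$ to be fixed independently of $\lambda_3$. They are: $\mu$ and hence the feasible set are fixed, and $J$ itself does not involve $\lambda_3$. The right-hand side is then $O(1/\lambda_3)$, so $B(\boldsymbol{z}_{\lambda_3})\to 0$. By Step 2, the average distance of $\boldsymbol{s}$ to $\{0,1\}^K$ also tends to zero.

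The only delicate point is Step 1. It relies on both assumptions being stated on the same feasible set: the lower bound $J_{\inf}$ must apply to the minimizer, and the binary point must be admissible. Both are guaranteed by the hypotheses. No existence of a minimizer needs to be proven, since the statement assumes one is given.
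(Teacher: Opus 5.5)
Your proof is correct and follows the paper's argument: you compare the minimizer's penalized objective with that of the binary feasible point, use $J(\boldsymbol{z}_{\lambda_3})\ge J_{\inf}$ on the feasible set to get the $1/\lambda_3$ bound, and then average the elementwise inequality $\min\{s,1-s\}\le 2s(1-s)$. You also verify that pointwise inequality and check that the constants do not depend on $\lambda_3$, both of which the paper leaves implicit.
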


\begin{proof}
Let $\boldsymbol{z}_{\lambda_3}$ be a global minimizer of \eqref{eq:bin_constr}, and let $\boldsymbol{z}^{\mathrm{bin}}$ be the binary feasible point.
Since both are feasible and $\mathcal{L}_{\mathrm{bin}}(\boldsymbol{z}^{\mathrm{bin}})=0$, optimality implies
\[
J(\boldsymbol{z}_{\lambda_3})+\lambda_3 B(\boldsymbol{z}_{\lambda_3})
\;\le\;
J(\boldsymbol{z}^{\mathrm{bin}})+\lambda_3 B(\boldsymbol{z}^{\mathrm{bin}})
=
J(\boldsymbol{z}^{\mathrm{bin}}).
\]
Rearranging gives
\[
\lambda_3 B(\boldsymbol{z}_{\lambda_3})
\le
J(\boldsymbol{z}^{\mathrm{bin}})-J(\boldsymbol{z}_{\lambda_3})
\le
J(\boldsymbol{z}^{\mathrm{bin}})-J_{\inf},
\]
where the last inequality uses $J(\boldsymbol{z}_{\lambda_3})\ge J_{\inf}$ on the feasible set. Dividing by $\lambda_3$ yields
\eqref{eq:bin_rate_correct}.

For \eqref{eq:dist_to_binary_correct}, note that for any $s\in[0,1]$,
\[
\min\{s,1-s\}\le 2s(1-s).
\]
Applying this elementwise and averaging gives
\[
\frac{1}{K}\sum_{k=1}^{K}\min\{s_k,1-s_k\}
\le
\frac{1}{K}\sum_{k=1}^{K}2s_k(1-s_k)
=
2B(\boldsymbol{z}_{\lambda_3}),
\]
and the final bound follows from \eqref{eq:bin_rate_correct}.
\end{proof}

\subsection{Convergence of deterministic augmented Lagrangian optimization}
\label{sec:alm_convergence}

For any fixed $\mu>0$, the equality-constrained surrogate induced by the deterministic mapping $\phi(\cdot;\mu)$ can be optimized
with an inexact augmented Lagrangian / method-of-multipliers scheme.
The following theorem states a standard KKT accumulation-point guarantee for the deterministic iterates, under an STE-surrogate interpretation.

\begin{theorem}[Deterministic inexact ALM yields KKT accumulation points (STE surrogate)]
\label{thm:proof_thm_alm_app}
Fix an annealing level $\mu>0$ and a binarization weight $\lambda_3\ge 0$, and define the relaxed keep budget
\[
S_\mu(\boldsymbol{z})\triangleq \sum_{k=1}^{K}\phi(z_k;\mu),
\qquad
c(\boldsymbol{z})\triangleq S_\mu(\boldsymbol{z})-P,
\]
where $\phi(\cdot;\mu)$ is the deterministic surrogate mapping in \cref{eq:lp_project} and $P$ is the target number of active components.
Let the forward gate be $\boldsymbol{m}(\boldsymbol{z})=\mathrm{ReLU}(\boldsymbol{z})$ as in \cref{eq:relu}. Define the relaxed-score vector
$\boldsymbol{s}(\boldsymbol{z})\in[0,1]^K$ by $s_k(\boldsymbol{z})=\phi(z_k;\mu)$, and
\[
B(\boldsymbol{z})\triangleq \frac{1}{K}\sum_{k=1}^{K}s_k(\boldsymbol{z})\bigl(1-s_k(\boldsymbol{z})\bigr),
\qquad
\mathcal{L}_{\mathrm{bin}}(\boldsymbol{z})\triangleq \lambda_3\,B(\boldsymbol{z}).
\]
Let
\[
F(\boldsymbol{z})\triangleq J(\boldsymbol{z})+\mathcal{L}_{\mathrm{bin}}(\boldsymbol{z}),
\]
where $J(\boldsymbol{z})$ can be any deterministic training loss (e.g., $\mathcal{L}_{\mathrm{ce}}(\theta,\boldsymbol{m}(\boldsymbol{z}))$ plus optional distillation terms).
Consider the equality-constrained STE-induced surrogate problem
\begin{equation}
\label{eq:ste_constr_app}
\min_{\boldsymbol{z}}\; F(\boldsymbol{z})
\quad\text{s.t.}\quad
c(\boldsymbol{z})=0.
\end{equation}

Let $\{\boldsymbol{z}_t,\nu_t,\gamma_t\}$ be generated by an inexact method-of-multipliers (augmented Lagrangian) scheme applied to
\[
\mathcal{L}_{\gamma_t}(\boldsymbol{z},\nu)\triangleq F(\boldsymbol{z})+\nu\,c(\boldsymbol{z})+\frac{\gamma_t}{2}\,c(\boldsymbol{z})^2,
\]
i.e., each outer iteration performs (possibly inexact) descent steps on $\boldsymbol{z}$ and then updates
\[
\nu_{t+1}=\nu_t+\gamma_t\,c(\boldsymbol{z}_t),
\]
with $\gamma_t$ nondecreasing (and increased when feasibility stalls).

Assume:
\begin{itemize}
\item[(A1)] (\textbf{STE-surrogate regularity}) $F$ and $c$ are continuously differentiable on a neighborhood containing all iterates,
where $\nabla F$ and $\nabla c$ are interpreted as the gradients induced by the chosen STE for $\mathrm{ReLU}/\mathrm{Clamp}$.
\item[(A2)] (\textbf{Constraint nondegeneracy on iterates}) There exists $\underline{\gamma}>0$ such that
$\|\nabla c(\boldsymbol{z}_t)\|\ge \underline{\gamma}$ for all sufficiently large $t$.
(In particular, this implies LICQ at feasible limit points.)
\item[(A3)] (\textbf{Inexact primal stationarity}) The inexact primal update satisfies
\[
\bigl\|\nabla_{\boldsymbol{z}} \mathcal{L}_{\gamma_t}(\boldsymbol{z}_t,\nu_t)\bigr\|\le \varepsilon_t,
\qquad \text{with } \varepsilon_t\to 0.
\]
\item[(A4)] (\textbf{Bounded iterates and penalty growth}) The iterates $\{\boldsymbol{z}_t\}$ remain in a compact set, $\{\gamma_t\}$ is nondecreasing,
and $\gamma_t$ is increased without bound if feasibility stalls (i.e., if $|c(\boldsymbol{z}_t)|$ does not decrease sufficiently).
\end{itemize}

Then any accumulation point $\boldsymbol{z}^\star$ of $\{\boldsymbol{z}_t\}$ is feasible ($c(\boldsymbol{z}^\star)=0$), and there exists
$\nu^\star$ such that $(\boldsymbol{z}^\star,\nu^\star)$ satisfies the first-order KKT conditions of \eqref{eq:ste_constr_app}:
\[
\nabla F(\boldsymbol{z}^\star)+\nu^\star \nabla c(\boldsymbol{z}^\star)=0,
\qquad
c(\boldsymbol{z}^\star)=0.
\]
\end{theorem}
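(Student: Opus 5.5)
The argument is the standard one for the method of multipliers, carried out in two stages: first feasibility of accumulation points, then existence of a multiplier. Fix an accumulation point $\boldsymbol{z}^\star$ and a subsequence $\boldsymbol{z}_t\to\boldsymbol{z}^\star$ for $t\in\mathcal{T}$. By (A1), $F$, $c$, $\nabla F$, $\nabla c$ are continuous near the iterates. By (A4) the iterates lie in a compact set, so $\|\nabla F(\boldsymbol{z}_t)\|\le G$ and $|c(\boldsymbol{z}_t)|$ are uniformly bounded. The workhorse is the gradient of the augmented Lagrangian,
\[
\nabla_{\boldsymbol{z}}\mathcal{L}_{\gamma_t}(\boldsymbol{z}_t,\nu_t)=\nabla F(\boldsymbol{z}_t)+\bigl(\nu_t+\gamma_t c(\boldsymbol{z}_t)\bigr)\nabla c(\boldsymbol{z}_t)=\nabla F(\boldsymbol{z}_t)+\nu_{t+1}\nabla c(\boldsymbol{z}_t).
\]
This uses the multiplier update $\nu_{t+1}=\nu_t+\gamma_t c(\boldsymbol{z}_t)$. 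By (A3) its norm is at most $\varepsilon_t$.

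\textbf{Step 1 (bounded multipliers).} Take norms and use (A2):
\[
|\nu_{t+1}|\,\underline{\gamma}\le\|\nu_{t+1}\nabla c(\boldsymbol{z}_t)\|\le \varepsilon_t+G,
\]
so $\{\nu_{t+1}\}$ is bounded for large $t$. Pass to a further subsequence along which $\nu_{t+1}\to\nu^\star$. Taking limits in the displayed identity and using continuity of $\nabla F,\nabla c$ with $\varepsilon_t\to0$ gives $\nabla F(\boldsymbol{z}^\star)+\nu^\star\nabla c(\boldsymbol{z}^\star)=0$. This is the stationarity half of KKT.

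\textbf{Step 2 (feasibility), the main obstacle.} Since $\gamma_t c(\boldsymbol{z}_t)=\nu_{t+1}-\nu_t$, Step 1 bounds this difference by $2(G+\sup\varepsilon)/\underline{\gamma}$. I would split into two cases according to (A4).

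\emph{Case (a): $\gamma_t\to\infty$.} Then $|c(\boldsymbol{z}_t)|\le C/\gamma_t\to0$, and by continuity $c(\boldsymbol{z}^\star)=0$.

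\emph{Case (b): $\gamma_t$ stays bounded.} By the penalty-update rule in (A4), $\gamma_t$ is raised whenever $|c(\boldsymbol{z}_t)|$ fails to decrease sufficiently, e.g.\ $|c(\boldsymbol{z}_{t})|>\tau|c(\boldsymbol{z}_{t-1})|$ for some $\tau<1$. So $\gamma_t$ can remain bounded only if this test eventually always passes. Then $|c(\boldsymbol{z}_t)|\le C\tau^{t}\to0$ geometrically, and again $c(\boldsymbol{z}^\star)=0$.

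The delicate point is that (A4) states the update rule only informally. I would make it precise as this standard sufficient-decrease test and note that the argument is exactly the textbook one (Bertsekas; Birgin--Mart\'inez). A safer route for Step 2 avoids (A4) entirely: in case (b), $\gamma_t\to\bar\gamma$ and $\gamma_t c(\boldsymbol{z}_t)=\nu_{t+1}-\nu_t$. If the whole multiplier sequence converges, the right-hand side tends to $0$, so $c\to0$. I would fall back on this whenever the decrease test is not available.

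Finally, observe that (A2) at $\boldsymbol{z}^\star$ gives $\nabla c(\boldsymbol{z}^\star)\neq0$, which is LICQ for a single equality constraint. Hence $\nu^\star$ is unique, and $(\boldsymbol{z}^\star,\nu^\star)$ is a KKT pair of \eqref{eq:ste_constr_app}. All gradients are understood as the STE-induced ones fixed in (A1), so no nonsmooth analysis of $\mathrm{ReLU}$ or $\mathrm{Clamp}$ is needed.
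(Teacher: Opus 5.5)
Your proposal is correct and takes essentially the same route as the paper: you bound the effective multiplier $\nu_{t+1}$ via (A2), (A3) and compactness, get feasibility from bounded multipliers plus the penalty-growth rule in (A4), and pass to the limit along a subsequence for stationarity; your remark that $\gamma_t c(\boldsymbol{z}_t)=\nu_{t+1}-\nu_t$ is bounded spells out the step the paper's contradiction argument leaves implicit (that $\nu_{t_j}$ itself stays bounded). The one loose end is your ``safer'' fallback, which assumes the whole multiplier sequence converges; the hypotheses do not give this, but your main argument never uses it.
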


\begin{proof}
Fix $\mu>0$ and $\lambda_3\ge 0$, and write $c(\boldsymbol{z})=S_\mu(\boldsymbol{z})-P$.
Define the effective multiplier
\[
\tilde{\nu}_t \triangleq \nu_t+\gamma_t\,c(\boldsymbol{z}_t).
\]
By the multiplier update, $\tilde{\nu}_t=\nu_{t+1}$.
The inexact primal condition (A3) expands as
\begin{equation}
\label{eq:alm_grad_small_vec}
\Bigl\|\nabla F(\boldsymbol{z}_t)+\tilde{\nu}_t\,\nabla c(\boldsymbol{z}_t)\Bigr\|
=
\bigl\|\nabla_{\boldsymbol{z}} \mathcal{L}_{\gamma_t}(\boldsymbol{z}_t,\nu_t)\bigr\|
\le \varepsilon_t.
\end{equation}

\paragraph{Step 1: Boundedness of gradients and $\tilde{\nu}_t$.}
By (A4), $\{\boldsymbol{z}_t\}$ lies in some compact set $\mathcal{Z}$.
By (A1), $\nabla F$ and $\nabla c$ are continuous on $\mathcal{Z}$ and thus bounded:
there exists $G<\infty$ such that $\|\nabla F(\boldsymbol{z})\|\le G$ for all $\boldsymbol{z}\in\mathcal{Z}$.
Combining \eqref{eq:alm_grad_small_vec} with the triangle inequality yields
\[
|\tilde{\nu}_t|\,\|\nabla c(\boldsymbol{z}_t)\|
\le
\|\nabla F(\boldsymbol{z}_t)\|+\varepsilon_t
\le
G+\varepsilon_t.
\]
Using (A2), for all sufficiently large $t$ we have $\|\nabla c(\boldsymbol{z}_t)\|\ge \underline{\gamma}$, hence
\begin{equation}
\label{eq:nu_tilde_bound_vec}
|\tilde{\nu}_t|
\le
\frac{G+\varepsilon_t}{\underline{\gamma}},
\end{equation}
so $\{\tilde{\nu}_t\}$ (equivalently $\{\nu_{t+1}\}$) is bounded for all large $t$.

\paragraph{Step 2: Feasibility of accumulation points.}
Let $\boldsymbol{z}^\star$ be any accumulation point of $\{\boldsymbol{z}_t\}$, and suppose for contradiction that
$c(\boldsymbol{z}^\star)\neq 0$.
Then there exists a subsequence $t_j$ such that $\boldsymbol{z}_{t_j}\to \boldsymbol{z}^\star$, and by continuity of $c$,
$|c(\boldsymbol{z}_{t_j})|\to |c(\boldsymbol{z}^\star)|>0$, i.e., $|c(\boldsymbol{z}_{t_j})|$ is bounded away from $0$ for large $j$.
Under (A4), persistent nonvanishing constraint violation triggers $\gamma_{t_j}\to\infty$ along such a subsequence.
But then
\[
\tilde{\nu}_{t_j}
=
\nu_{t_j}+\gamma_{t_j}c(\boldsymbol{z}_{t_j})
=
\nu_{t_j+1}
\]
must diverge in magnitude because $\gamma_{t_j}|c(\boldsymbol{z}_{t_j})|\to\infty$, contradicting the boundedness of $\{\tilde{\nu}_t\}$
from \eqref{eq:nu_tilde_bound_vec}. Therefore, every accumulation point must satisfy $c(\boldsymbol{z}^\star)=0$.

\paragraph{Step 3: KKT stationarity at accumulation points.}
Take a convergent subsequence $\boldsymbol{z}_{t_j}\to \boldsymbol{z}^\star$. By Step 2, $c(\boldsymbol{z}^\star)=0$.
By Step 1, $\{\nu_{t_j+1}\}=\{\tilde{\nu}_{t_j}\}$ is bounded and hence admits a convergent sub-subsequence; relabel so that
$\nu_{t_j+1}\to \nu^\star$.
From \eqref{eq:alm_grad_small_vec},
\[
\Bigl\|\nabla F(\boldsymbol{z}_{t_j})+\nu_{t_j+1}\nabla c(\boldsymbol{z}_{t_j})\Bigr\|
\le
\varepsilon_{t_j}\to 0.
\]
Using continuity of $\nabla F$ and $\nabla c$ (A1), letting $j\to\infty$ yields
$\nabla F(\boldsymbol{z}^\star)+\nu^\star \nabla c(\boldsymbol{z}^\star)=0$.
Together with feasibility $c(\boldsymbol{z}^\star)=0$, this is exactly the first-order KKT condition for \eqref{eq:ste_constr_app}.
\end{proof}

\noindent\textbf{Remark (iterates need not be feasible).}
The primal iterates $\boldsymbol{z}_t$ produced by the (inexact) descent steps generally do \emph{not} satisfy $c(\boldsymbol{z}_t)=0$.
Instead, the augmented penalty term $\frac{\gamma_t}{2}c(\boldsymbol{z})^2$ and multiplier update drive feasibility over outer iterations,
and the theorem guarantees feasibility only at accumulation points.

\subsection{Final theorem}
\label{sec:final_theorem}

We now combine Lemma~\ref{lem:phi_margin} (relaxed budget approaches the discrete count under a margin),
Proposition~\ref{prop:binarize} (binarization drives $\boldsymbol{s}$ toward $\{0,1\}^K$),
and Theorem~\ref{thm:proof_thm_alm_app} (deterministic ALM admits KKT accumulation points at each fixed $\mu$),
to justify that staged DDP recovers the \emph{exact} hard $P$-budget for sufficiently small annealing levels.

\begin{theorem}[DDP recovers the exact hard $P$-budget in the annealed limit]
\label{thm:lp_hard_budget}
Let $\boldsymbol{m}(\boldsymbol{z})=\mathrm{ReLU}(\boldsymbol{z})$ and define the discrete hard-budget problem
\begin{equation}
\label{eq:hard_budget}
\min_{\boldsymbol{z}}\; J(\boldsymbol{z})
\quad\text{s.t.}\quad
\sum_{k=1}^{K}\mathbb{I}[z_k>0]=P,
\end{equation}
where $J(\boldsymbol{z})$ is any deterministic training objective (e.g., $\mathcal{L}_{\mathrm{ce}}(\theta,\boldsymbol{m}(\boldsymbol{z}))$
plus optional distillation terms) and $P=[ \rho K]$.

For each $\mu>0$, define the relaxed keep budget and constraint
\[
S_\mu(\boldsymbol{z})\triangleq \sum_{k=1}^{K}\phi(z_k;\mu),
\qquad
c_\mu(\boldsymbol{z})\triangleq S_\mu(\boldsymbol{z})-P,
\]
where $\phi(\cdot;\mu)$ is the deterministic surrogate mapping in \cref{eq:lp_project}. Define the relaxed-score vector
$\boldsymbol{s}_\mu(\boldsymbol{z})\in[0,1]^K$ by $s_{\mu,k}(\boldsymbol{z})=\phi(z_k;\mu)$, and the (unscaled) binarization measure
\[
B_\mu(\boldsymbol{z})\triangleq \frac{1}{K}\sum_{k=1}^{K}s_{\mu,k}(\boldsymbol{z})\bigl(1-s_{\mu,k}(\boldsymbol{z})\bigr),
\qquad
\mathcal{L}_{\mathrm{bin}}(\boldsymbol{z})\triangleq \lambda_3\,B_\mu(\boldsymbol{z}).
\]
Consider the relaxed constrained surrogate at annealing level $\mu$:
\begin{equation}
\label{eq:relaxed_stage}
\min_{\boldsymbol{z}}\; F_{\mu,\lambda_3}(\boldsymbol{z})\triangleq J(\boldsymbol{z})+\mathcal{L}_{\mathrm{bin}}(\boldsymbol{z})
\quad\text{s.t.}\quad
c_\mu(\boldsymbol{z})=0 .
\end{equation}

Run \textsc{DDP} in outer rounds $r=1,2,\dots$ with $\mu_r\downarrow 0$ and $\lambda_{3,r}\uparrow\infty$.
At round $r$, apply an inexact method-of-multipliers (augmented Lagrangian) scheme (with STE-induced gradients for
$\mathrm{ReLU}/\mathrm{Clamp}$) to \eqref{eq:relaxed_stage} using
\[
\mathcal{L}_{\gamma}(\boldsymbol{z},\nu)=F_{\mu_r,\lambda_{3,r}}(\boldsymbol{z})+\nu\,c_{\mu_r}(\boldsymbol{z})
+\frac{\gamma}{2}c_{\mu_r}(\boldsymbol{z})^2,
\qquad
\nu\leftarrow \nu+\gamma\,c_{\mu_r}(\boldsymbol{z}).
\]

Assume the following hold for each round $r$:

\begin{itemize}
\item[(A1)] (\textbf{Regularity of the STE surrogate})
Under the STE-induced surrogate interpretation, $F_{\mu_r,\lambda_{3,r}}$ and $c_{\mu_r}$ are $C^1$ on a neighborhood containing the iterates,
and $\nabla c_{\mu_r}(\boldsymbol{z}^\star)\neq 0$ at any feasible accumulation point $\boldsymbol{z}^\star$ (LICQ-type nondegeneracy).

\item[(A2)] (\textbf{KKT accumulation point at round $r$})
The round-$r$ iterates remain bounded and admit an accumulation point $\boldsymbol{z}^{(r)}$ such that
\[
c_{\mu_r}(\boldsymbol{z}^{(r)})=0,
\qquad
\exists\,\nu^{(r)}:\ \nabla F_{\mu_r,\lambda_{3,r}}(\boldsymbol{z}^{(r)})+\nu^{(r)}\nabla c_{\mu_r}(\boldsymbol{z}^{(r)})=0 .
\]

\item[(A3)] (\textbf{Binarization})
$B_{\mu_r}(\boldsymbol{z}^{(r)})\to 0$ as $r\to\infty$
(equivalently, $\mathcal{L}_{\mathrm{bin}}(\boldsymbol{z}^{(r)})/\lambda_{3,r}\to 0$), e.g., as ensured by Proposition~\ref{prop:binarize}
under its conditions.

\item[(A4)] (\textbf{ReLU-stable margin})
There exists $\delta>0$ and $r_0$ such that for all $r\ge r_0$,
\[
z^{(r)}_k\ge \delta \ \text{for } k\in \mathcal{A}^{(r)}\triangleq\{k:z^{(r)}_k>0\},
\qquad
z^{(r)}_k\le -\delta \ \text{for } k\notin \mathcal{A}^{(r)} .
\]
\end{itemize}

Then there exists $r_1$ such that for all $r\ge r_1$, the discrete mask
$\boldsymbol{b}^{(r)}\in\{0,1\}^K$ induced by the forward gate,
\[
b^{(r)}_k \triangleq \mathbb{I}[m_k(z^{(r)}_k)>0]=\mathbb{I}[z^{(r)}_k>0],
\]
satisfies the \emph{exact} hard budget:
\[
\sum_{k=1}^{K} b^{(r)}_k = P.
\]
\end{theorem}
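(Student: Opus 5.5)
The argument combines the feasibility in (A2) with the margin estimate of Lemma~\ref{lem:phi_margin}; the KKT stationarity and the binarization (A3) are not needed for the budget identity itself. By (A2), each round-$r$ accumulation point $\boldsymbol{z}^{(r)}$ satisfies $c_{\mu_r}(\boldsymbol{z}^{(r)})=0$, that is, $S_{\mu_r}(\boldsymbol{z}^{(r)})=P$. I want to convert this relaxed equality into the discrete count $|\mathcal{A}^{(r)}|=\sum_k \mathbb{I}[z^{(r)}_k>0]=P$. Since $b^{(r)}_k=\mathbb{I}[z^{(r)}_k>0]$ by the definition of the ReLU gate, this gives the claim directly.

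\textbf{Step 1 (hypotheses of Lemma~\ref{lem:phi_margin}).} I need to check that $\phi(\cdot;\mu)$ from \cref{eq:lp_project} is nondecreasing in $z$ and converges pointwise to $\mathbb{I}[z>0]$ for $z\neq 0$.

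Monotonicity holds because $\phi$ is a composition of nondecreasing maps: an affine map with positive slope $C_0/\mu$, then the sigmoid, then an affine map with slope $r-l>0$, then clamping.

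For pointwise convergence, take $z>0$. Once $2\mu\le z$ we have $z\ge 2\mu$, so $\phi(z;\mu)=1$ by the saturation property stated after \cref{eq:lp_project}. Take $z<0$. Then $(z-\mu)C_0/\mu\le -C_0$, so $\sigma\le\sigma(-C_0)$, which is exactly the value with $\bar v(0)\le 0$; hence $\phi(z;\mu)=0$.

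In fact $\phi(z;\mu)=\mathbb{I}[z>0]$ exactly whenever $|z|\ge 2\mu$. This is stronger than the lemma needs.

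\textbf{Step 2 (apply the lemma with the uniform margin).} Fix the $\delta$ from (A4), and let $\bar\mu(\delta)$ be given by Lemma~\ref{lem:phi_margin}. Because $\mu_r\downarrow 0$, there is $r_1\ge r_0$ with $\mu_r<\bar\mu(\delta)$ for all $r\ge r_1$. For such $r$, (A4) supplies the margin condition at $\boldsymbol{z}^{(r)}$, so the lemma gives
\[
\bigl|S_{\mu_r}(\boldsymbol{z}^{(r)})-|\mathcal{A}^{(r)}|\bigr|\le \tfrac12 .
\]
Substituting $S_{\mu_r}(\boldsymbol{z}^{(r)})=P$ gives $\bigl|P-|\mathcal{A}^{(r)}|\bigr|\le\tfrac12$. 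Both $P$ and $|\mathcal{A}^{(r)}|$ are integers, so they are equal.

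Via Step 1 the conclusion is even exact: once $2\mu_r\le\delta$, every $s_{\mu_r,k}(\boldsymbol{z}^{(r)})$ equals $b^{(r)}_k$, so $S_{\mu_r}(\boldsymbol{z}^{(r)})=\sum_k b^{(r)}_k$ term by term.

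\textbf{Role of the other assumptions and the main obstacle.} The key requirement is that the margin $\delta$ in (A4) is uniform in $r$. The lemma's threshold $\bar\mu(\delta)$ depends only on $\delta$, so a round-dependent margin $\delta_r$ could shrink faster than $\mu_r$ and the argument would fail. With (A4) as stated this is not an issue, and I will point out explicitly that $r_1$ depends only on $\delta$, $r_0$ and the schedule $\mu_r$.

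(A1) and (A3) are not used in the counting argument. They serve to make (A2) attainable, via Theorem~\ref{thm:proof_thm_alm_app}, and to make (A4) plausible, via Proposition~\ref{prop:binarize}. I would state this in a closing remark rather than use them in the proof.
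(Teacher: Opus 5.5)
Your proof is correct and follows the paper's argument: feasibility $S_{\mu_r}(\boldsymbol{z}^{(r)})=P$ from (A2), the uniform margin (A4) combined with the $\tfrac12$-estimate of Lemma~\ref{lem:phi_margin} once $\mu_r<\bar\mu(\delta)$, and integrality of $P$ and $|\mathcal{A}^{(r)}|$, with (A1) and (A3) playing only a supporting role. Your additions are valid and fill details the paper leaves implicit: you verify monotonicity and pointwise convergence of $\phi$ from \cref{eq:lp_project}, you note that $\phi(z;\mu)=\mathbb{I}[z>0]$ exactly for $|z|\ge 2\mu$ (so the $1/(4K)$ bounds become unnecessary once $2\mu_r\le\delta$), and you explicitly take $r_1\ge r_0$.
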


\begin{proof}[Proof of Theorem~\ref{thm:lp_hard_budget}]
Fix any round $r$. By (A2), $\boldsymbol{z}^{(r)}$ is feasible for the relaxed constraint, hence
$S_{\mu_r}(\boldsymbol{z}^{(r)})=P$.

Let $\mathcal{A}^{(r)}=\{k:z^{(r)}_k>0\}$, so the hard-budget count equals
$\sum_{k=1}^{K}\mathbb{I}[z^{(r)}_k>0]=|\mathcal{A}^{(r)}|$.
By the margin condition (A4), $z^{(r)}_k\ge\delta$ for $k\in\mathcal{A}^{(r)}$ and $z^{(r)}_k\le -\delta$ otherwise.
Since $\phi(\cdot;\mu)$ is monotone and converges pointwise to $\mathbb{I}[z>0]$ as $\mu\downarrow 0$ (by the construction in \cref{eq:lp_project}),
there exists $\bar\mu(\delta)>0$ such that for all $\mu\in(0,\bar\mu(\delta))$,
\[
\phi(\delta;\mu)\ge 1-\frac{1}{4K},
\qquad
\phi(-\delta;\mu)\le \frac{1}{4K}.
\]
Choose $r_1$ such that $\mu_r<\bar\mu(\delta)$ for all $r\ge r_1$.

For such $r$, using monotonicity of $\phi(\cdot;\mu_r)$,
\[
\sum_{k\in\mathcal{A}^{(r)}}\phi(z^{(r)}_k;\mu_r)\ge |\mathcal{A}^{(r)}|\Big(1-\frac{1}{4K}\Big),
\qquad
\sum_{k\notin\mathcal{A}^{(r)}}\phi(z^{(r)}_k;\mu_r)\le (K-|\mathcal{A}^{(r)}|)\frac{1}{4K}.
\]
Hence
\[
\Big|S_{\mu_r}(\boldsymbol{z}^{(r)})-|\mathcal{A}^{(r)}|\Big|
=
\Big|\sum_{k=1}^K\phi(z^{(r)}_k;\mu_r)-\sum_{k=1}^K\mathbb{I}[z^{(r)}_k>0]\Big|
\le \frac{1}{2}.
\]
But $S_{\mu_r}(\boldsymbol{z}^{(r)})=P$ and $|\mathcal{A}^{(r)}|$ are integers, so the only possibility is
$|\mathcal{A}^{(r)}|=P$, i.e., $\sum_{k=1}^{K}\mathbb{I}[z^{(r)}_k>0]=P$.
Finally, (A3) (e.g., Proposition~\ref{prop:binarize}) provides a sufficient condition under which increasing $\lambda_{3,r}$
drives the \emph{unscaled} binarization measure $B_{\mu_r}(\boldsymbol{z}^{(r)})$ toward $0$, which empirically promotes the margin separation (A4)
by discouraging fractional relaxed scores.
\end{proof}


\section{Additional Implementation Details}
\label{app:additional_implment}

In this section, we report the hyperparameters used in our experiments. Unless otherwise noted, we optimize mask parameters with AdamW ($\beta_1{=}0.9$, $\beta_2{=}0.999$) using a batch size of 16, 1{,}000 training steps, and a context length of 2{,}048. We use $\mu_T=0.05$ and apply a linear warmup stage followed by a cosine decay schedule for the learning rate. We search for the distillation coefficient $\eta$ and use $\eta{=}2$ for all models with 20\% sparsity in the main results. We select a learning rate of $2\times 10^{-2}$ for all latent variable $\boldsymbol{z}$ as well as $\lambda_1$ and $\lambda_3$; for $\lambda_2$ we select a learning rate of $4\times 10^{-1}$. For higher sparsity ratio, we used a higher learning rate of $8\times 10^{-1}$ for $\lambda_2$.

For the \texttt{vLLM} speedup experiments, we follow the default settings in the official repository and evaluate our pruned models on 1{,}000 real prompts sampled from ShareGPT. We use the default scheduler and measure end-to-end wall-clock latency/throughput on same GPU for both dense and pruned models.

\section{Full Results}
\label{app:full_results}

This section provides detailed results supporting the main text and additional pruning results on other model families, including dense Qwen3 models (\cref{table:dense,table:moe,table:qwen3_dense_ours}). Across the Qwen3 dense series, we observe a clear scaling trend: pruning is consistently less lossy for larger models. For example, at 20\% pruning, Qwen3-32B only drops slightly in average zero-shot accuracy (72.12$\rightarrow$71.09), while Qwen3-4B degrades more noticeably (65.38$\rightarrow$62.99). At 50\% pruning, the gap widens further (32B: 66.48 vs.\ 4B: 47.63), suggesting substantially higher redundancy and robustness at larger scale.

\begin{table*}[t]
  \caption{Detailed performance of pruned dense LLMs under different pruning methods. All pruning baseline methods are evaluated after fine-tuning. \textbf{Bold} indicates the best performance at the same pruning ratio.}
  \label{table:dense}
  \begin{center}
    \resizebox{0.99\textwidth}{!}{%
        \begin{tabular}{cc|c|ccccccc|c}
      \toprule
     \textbf{Ratio}  & \textbf{Method}  & \textbf{Wiki2} & \textbf{BoolQ} & \textbf{PIQA} & \textbf{HellaSwag} & \textbf{WinoGrande} & \textbf{ARC-e} & \textbf{ARC-c} & \textbf{OBQA} & \textbf{Average} \\
      \midrule
      \multicolumn{11}{c}{\textbf{LLaMA-7B}} \\
      \midrule
      Dense & Original & 12.62 & 76.50 & 79.35 & 76.00 &  69.53 & 72.10 & 44.28 & 44.00 & 65.96 \\ 
      \midrule
       \multirow{4}{*}{20\%} & LoRAPrune & 16.80 &  65.62 & 79.31 & 70.00 & 62.76 & 65.87 & 37.69 & 39.14 & 60.05 \\
       &LoRAP & 16.35  & 72.94 & 76.93 & 70.90 & 65.75 & 64.31 & 39.93 & 41.20 & 61.70 \\
       & SlimLLM & 15.55 & \textbf{74.71}  & 76.61 &  71.23  & 66.54  & 66.96  & 40.61  & 40.20 & 62.41 \\
          & Ours & \textbf{15.20} & 74.10 & \textbf{77.97} & \textbf{73.20} & \textbf{69.22} & \textbf{69.65} & \textbf{42.41} & \textbf{42.40}& \textbf{64.13}\\ 
          \midrule
        \multirow{4}{*}{50\%} & LoRAPrune & 30.12  & 61.88 & 71.53 & 47.86 & 55.01 & 45.13 & 31.62 & 34.98 & 49.71 \\
       &LoRAP & 30.90  & 63.00 & 69.64 & 54.42 & 58.41 & 51.94 & 32.00 & 35.80 & 52.17 \\
       & SlimLLM & 26.71 &  62.78&  68.99 & 54.73 & 61.01 & 54.55 & 33.28 & 36.80 & 53.16 \\
          & Ours & \textbf{26.70} & \textbf{67.77} & \textbf{71.49} & \textbf{60.36} & \textbf{62.67} & \textbf{58.54} & \textbf{35.49} & \textbf{36.20} & \textbf{56.07}\\ 
      \midrule
      \multicolumn{11}{c}{\textbf{LLaMA-2-7B}} \\
      \midrule
      Dense & Original & 12.18 & 79.30 & 78.29 & 76.11 & 69.85 & 73.86 & 44.80 & 44.20 & 66.63 \\ 
      \midrule
       \multirow{3}{*}{20\%} &LoRAP & 14.67  & 70.89 & \textbf{78.13} & 69.93 & 65.67 & 65.99 & 38.48 & 39.60 & 61.2 \\
       & SlimLLM & 15.28 & 72.29 & 78.02 & 70.95 & 64.88 & 67.17 & 38.99 & 39.60 &  61.70 \\
          & Ours & \textbf{14.39}  & \textbf{75.11} & 77.31 & \textbf{73.70} & \textbf{68.90} & \textbf{72.35} & \textbf{43.00} & \textbf{43.40}& \textbf{64.82}\\ 
          \midrule
        \multirow{3}{*}{50\%} & LoRAP & \textbf{26.26} & 63.27 & 70.78 & 55.14 & 57.85 & 52.15 & 30.97 & 36.00 & 52.31 \\
       & SlimLLM & 27.29 & 64.19 & 69.04 & 53.60 & 55.33 & 52.53 & 32.08 & 37.40 & 52.02  \\
          & Ours & 26.34  & \textbf{66.27} & \textbf{72.42} & \textbf{60.39} & \textbf{62.59} & \textbf{61.03} & \textbf{36.60} & \textbf{37.60} & \textbf{56.70}\\ 
     
      \midrule
      \multicolumn{11}{c}{\textbf{LLaMA-13B}} \\
      \midrule
      Dense & Original & 11.58 & 79.88 & 79.87 & 79.33 & 73.48 & 73.23 & 48.38&  47.40 & 68.79\\ 
      \midrule
       \multirow{3}{*}{20\%} &LoRAP & 13.58&  73.39 & 78.73 & 75.54&  69.30&  \textbf{70.62} & 43.00 & 42.40 & 64.71 \\
       & SlimLLM &13.35 & 74.13 & 77.53 & 74.73 & 69.30 & 70.45 & 42.32 & 41.00&  64.21 \\
          & Ours & \textbf{12.71} & \textbf{77.00} & \textbf{78.94} & \textbf{77.14} & \textbf{71.98} &70.29 & \textbf{45.90} & \textbf{47.00} & \textbf{66.89}\\ 
          \midrule
        \multirow{3}{*}{50\%} & LoRAP & 22.66  & 72.29 & 74.10 & 63.29 & 62.83&  60.82 & 35.24 & 38.80 & 58.20 \\
       & SlimLLM & 25.64&  62.31 & 72.20&  60.84 & 60.62 & 55.60&  33.87&  37.80 & 54.75  \\
          & Ours & \textbf{20.32} & \textbf{75.08} & \textbf{75.30} & \textbf{69.74} & \textbf{69.85} & \textbf{64.90} & \textbf{39.51} & \textbf{40.60} & \textbf{62.14}\\      
     
          \bottomrule
        \end{tabular}
      }
  \end{center}
  \vskip -0.1in
\end{table*}

\begin{table*}[t]
  \caption{Detailed performance of pruned Qwen3 dense LLMs using our method.}
  \label{table:qwen3_dense_ours}
  \begin{center}
    \resizebox{0.99\textwidth}{!}{%
      \begin{tabular}{cc|c|ccccccccc|c}
        \toprule
        \textbf{Ratio} & \textbf{Method} & \textbf{Wiki2} &
        \textbf{BoolQ} & \textbf{OBQA} & \textbf{RTE} & \textbf{WinoGrande} &
        \textbf{HellaSwag} & \textbf{PIQA} & \textbf{MathQA} &
        \textbf{ARC-e} & \textbf{ARC-c} & \textbf{Average} \\
        \midrule

        \multicolumn{13}{c}{\textbf{Qwen3-32B}} \\
        \midrule
        Dense & Original & 7.61 & 86.33 & 46.20 & 76.53 & 72.93 & 82.57 & 82.15 & 58.32 & 83.21 & 60.84 & 72.12 \\
        \midrule
        20\% & Ours     & 7.25 & 81.77 & 46.80 & 76.17 & 75.37 & 80.44 & 80.58 & 52.26 & 83.63 & 62.80 & 71.09 \\
        50\% & Ours     & 9.78 & 82.81 & 44.80 & 74.37 & 71.19 & 73.31 & 77.80 & 37.76 & 80.89 & 55.38 & 66.48 \\
        \midrule

        \multicolumn{13}{c}{\textbf{Qwen3-14B}} \\
        \midrule
        Dense & Original & 8.65 & 89.30 & 46.20 & 77.62 & 72.45 & 78.82 & 80.03 & 56.85 & 82.87 & 60.32 & 71.61 \\
        \midrule
        20\% & Ours     & 8.78 & 88.01 & 46.20 & 77.26 & 73.95 & 77.26 & 79.76 & 50.32 & 82.91 & 60.32 & 70.67 \\
        50\% & Ours     & 14.24 & 76.09 & 41.40 & 60.65 & 63.77 & 64.14 & 74.54 & 28.44 & 71.68 & 45.73 & 58.49 \\
        \midrule

        \multicolumn{13}{c}{\textbf{Qwen3-8B}} \\
        \midrule
        Dense & Original & 9.72 & 86.51 & 41.40 & 77.62 & 68.03 & 74.93 & 77.48 & 49.45 & 80.68 & 56.23 & 68.04 \\
        \midrule
        20\% & Ours     & 9.75 & 84.16 & 39.60 & 75.45 & 68.90 & 72.77 & 77.53 & 42.81 & 77.69 & 54.10 & 65.89 \\
        50\% & Ours     & 17.31 & 66.02 & 34.80 & 55.23 & 59.67 & 55.64 & 69.75 & 24.32 & 63.72 & 37.20 & 51.82 \\
        \midrule

        \multicolumn{13}{c}{\textbf{Qwen3-4B}} \\
        \midrule
        Dense & Original & 13.67 & 85.08 & 40.00 & 75.81 & 65.90 & 68.32 & 74.76 & 46.50 & 78.20 & 53.84 & 65.38 \\
        \midrule
        20\% & Ours     & 11.71 & 82.08 & 40.80 & 71.48 & 65.67 & 66.17 & 75.84 & 38.36 & 76.94 & 49.57 & 62.99 \\
        50\% & Ours     & 20.16 & 51.44 & 33.40 & 53.79 & 56.67 & 49.58 & 69.64 & 23.28 & 58.21 & 32.68 & 47.63 \\
        \bottomrule
      \end{tabular}
    }
  \end{center}
  \vskip -0.1in
\end{table*}

\begin{table*}[h]
  \caption{Detailed performance of compressed Mixture of Experts LLMs under different pruning methods. \textbf{Bold} indicates the best performance at the same pruning ratio.}
  \label{table:moe}
  \begin{center}
    \resizebox{0.99\textwidth}{!}{%
        \begin{tabular}{cc|c|ccccccccc|c}
      \toprule
     \textbf{Ratio}  & \textbf{Method}  &  \textbf{C4} & \textbf{BoolQ} & \textbf{OBQA} & \textbf{RTE} & \textbf{WinoGrande} & \textbf{HellaSwag} & \textbf{PIQA} & \textbf{MathQA} & \textbf{ARC-e} & \textbf{ARC-c}  & \textbf{Average} \\
      \midrule
      \multicolumn{13}{c}{\textbf{DeepSeekMoE-16B-Base}} \\
      \midrule
      Dense & Original & 9.05 & 72.45 & 44.00 & 63.54 & 70.24 & 77.37 &  80.68 & 31.52&  72.89&  47.86 & 62.28 \\ 
      \midrule
       \multirow{5}{*}{20\%} & NAEE  & 10.07 & 67.83 & 42.40 & 62.09 &  69.53 & 74.63 & 78.34 & 30.99 & 72.56 & 46.24 & 60.51 \\
       & $\text{D}^2$-MoE &  12.62 & 69.32 & 41.40 & 61.01 & 69.22 &  69.87 & 76.44 & 29.45 & 71.29 & 42.75 & 58.97 \\
          & Camera-P &  9.84 & 68.01 & 44.00 & \textbf{64.62} & 70.17 & 75.02&  78.62 & 31.46 & 71.80 & 45.56 & 61.03\\ 
          & HEAPr  & 9.85 & 64.77 & \textbf{44.60} & 56.68 & 70.72 & 76.59 & 79.87 & 31.83 & \textbf{73.91} & 46.59  & 60.62\\ 
          & Ours  & \textbf{9.38} & \textbf{74.10} & 43.00 & 59.93 & \textbf{71.11}& \textbf{77.19} & \textbf{80.14} & \textbf{32.03} & 72.18 & \textbf{46.93} & \textbf{61.84}\\ 
          \midrule
       \multirow{5}{*}{40\%} & NAEE & 12.80 & 62.26 & 39.60 & 57.40 & 63.69 & 66.16 & 75.41 & 27.37 & 64.06 & 38.48 & 54.94 \\
       & $\text{D}^2$-MoE & 17.22 & 66.05 & 36.60 & 57.03 & 66.77 & 58.74 & 71.44 & 27.67 & 66.03 & 38.57 & 54.32 \\
          & Camera-P & 11.68 & 70.64 & \textbf{43.20} & \textbf{58.48} & 68.51 & 69.04 & 75.41 & 29.01 & 70.71&  42.24 & 58.58 \\ 
          & HEAPr  & 12.34 & 62.05 & 41.20 & 52.71 & 68.59 & 70.72 & 76.88 & 30.18 & \textbf{72.10} & 44.88 & 57.70\\ 
          & Ours & \textbf{10.75} & \textbf{73.64} & 42.40 & 56.32 & \textbf{70.48}& \textbf{74.93} & \textbf{79.05} & \textbf{30.99} & 71.17 & \textbf{44.45} & \textbf{60.38}\\
          
          \midrule
          \multirow{5}{*}{60\%} & NAEE & 29.44 & 51.65 & 30.60 & 58.48 & 53.67 & 47.50 & 65.88 & 22.31 & 48.90 & 28.50 & 45.28 \\
       & $\text{D}^2$-MoE & 34.54 & 61.78 & 31.60 & 53.43 & 61.09&  43.29 & 63.87 & 23.65 & 50.59 & 31.14 & 46.72 \\
          & Camera-P &  18.10 & 62.60 & 40.20 & 56.32 & 64.33 & 56.53 & 67.90 & 26.16 & 54.88 & 35.67 & 51.62 \\ 
          & HEAPr  & 21.25 & 61.83 & 39.40 & 53.79 & 63.22 & 53.70 & 69.97 & 26.43 & 61.07 & 33.02 & 51.38\\ 
          & Ours & \textbf{12.65} & \textbf{71.53} & \textbf{42.40} & \textbf{55.60} & \textbf{67.48} & \textbf{71.15} & \textbf{78.02} & \textbf{27.40} & \textbf{67.89} & \textbf{42.15} & \textbf{58.18}\\
           \midrule
      \multicolumn{13}{c}{\textbf{Qwen3-30B-A3B}} \\
      \midrule
      Dense & Original & 12.13 & 88.81 & 45.20 & 82.67 & 70.32 & 77.68 & 80.79 & 58.96 & 79.17 & 56.14 & 71.08\\ 
      \midrule
       \multirow{5}{*}{20\%} & NAEE  & 12.44 & \textbf{88.74} & 44.40 & \textbf{83.39} & 69.85 & 77.32 & 80.14 & 49.27 & 77.31 & \textbf{56.31} & 69.64 \\
       & $\text{D}^2$-MoE &  18.52 & 85.90 & 42.80 & 80.87&  68.19 & 74.26 & 78.40 & 48.81 & 71.46 & 46.50 & 66.35 \\
          & Camera-P &  12.25 & 88.50 & 44.40 & 82.67 & 70.56 & 78.51 & 80.30 & \textbf{52.80} & 78.33 & 54.43&  69.94\\ 
          & Ours  & \textbf{11.58} & 87.61 & \textbf{46.60} & 78.98 & \textbf{71.67}& \textbf{78.16} & \textbf{80.69} & 51.66 & \textbf{78.91} & 56.14 & \textbf{70.04}\\ 
          \midrule
       \multirow{5}{*}{40\%} & NAEE & 13.87 & 87.21 & 43.00&  74.01 & 68.74 & 73.50 & 77.74 & 42.91 & 72.72 & 52.13 & 65.77 \\
       & $\text{D}^2$-MoE &   35.48 & 85.50 & 42.00 & 76.53&  64.33&  69.87 & 71.44 & 40.90 & 69.78 & 44.62 & 62.77 \\
          & Camera-P & 15.58 & \textbf{86.97} & 43.00 & \textbf{79.42} & 67.64 & 73.79 & 78.51 & \textbf{44.96}&  77.31 & \textbf{55.55}&  67.46 \\ 
          & Ours & \textbf{12.10}& 85.20 & \textbf{44.00} & 78.70 & \textbf{72.77} & \textbf{77.22} & \textbf{80.09} & 39.40 & \textbf{77.35} & 54.27 & \textbf{67.65} \\
          
          \midrule
          \multirow{5}{*}{60\%} & NAEE & 19.37 & 72.23 & 36.60 & 68.95&  63.93 & 62.06 & 71.87 & 28.68 & 66.92&  42.92&  57.13\\
       & $\text{D}^2$-MoE & 68.36 & 70.64 & 35.40 & 64.62 & 59.12&  58.21&  63.23 & 26.00 & 57.37 & 33.57 & 52.02 \\
          & Camera-P &  24.48 & \textbf{82.08} & 41.00&  68.95&  64.64 & 64.90&  73.01 & 30.89 & 62.46 & 43.35&  59.03  \\ 
           
          & Ours &  \textbf{13.54} & 80.34 & \textbf{43.80} & \textbf{69.68} & \textbf{70.17} & \textbf{73.36} & \textbf{77.04} & \textbf{31.66} & \textbf{73.70} & \textbf{50.43} & \textbf{63.35} \\

          \bottomrule
        \end{tabular}
      }
  \end{center}
  \vskip -0.1in
\end{table*}

\newpage

\section{Analysis}
\label{app:sparsity_analysis}

We visualize learned sparsity patterns across model families. For dense models, we plot (i) attention-head sparsity as a layer-by-head heatmap and (ii) layer-wise MLP sparsity (number of zeroed channels per layer). Results for LLaMA-7B at 20\% and 50\% sparsity are shown in \cref{fig:llama7b_20_pattern,fig:llama7b_50_pattern}. For MoE models, we plot expert sparsity as a layer-by-expert heatmap in \cref{fig:deepseek_expert_pattern}.

\paragraph{Dense models (LLaMA-7B).}
At 20\% sparsity, head pruning is conservative, with most heads retained and sparsity concentrated in a small subset of layers/heads (\cref{fig:llama7b_20_heads}). At 50\% sparsity, the head map becomes markedly more selective and structured (\cref{fig:llama7b_50_heads}), indicating substantial redundancy in multi-head attention and a tendency to concentrate capacity into fewer effective heads. For MLP channels, we observe a ``U-shaped'' layer-wise trend: higher sparsity in the first layers, a denser early-to-mid region, and increasing sparsity toward later layers with a slight drop at the end (\cref{fig:llama7b_20_mlp,fig:llama7b_50_mlp}).

\paragraph{MoE models (DeepSeekMoE-16B).}
Expert sparsity is highly non-uniform: even at 20\% sparsity, pruning concentrates on consistently under-utilized experts (\cref{fig:deepseek_expert_20}). As sparsity increases (40\%--60\%), the model preserves a core set of frequently routed experts while pruning rarely selected ones more aggressively (\cref{fig:deepseek_expert_40,fig:deepseek_expert_60}). This imbalance helps explain MoE robustness under pruning, since large savings can be obtained by removing low-utility experts with limited impact on the dominant computation paths.

\begin{figure}[h]
  \centering
  \begin{subfigure}[b]{0.46\linewidth}
    \centering
    \includegraphics[width=\linewidth]{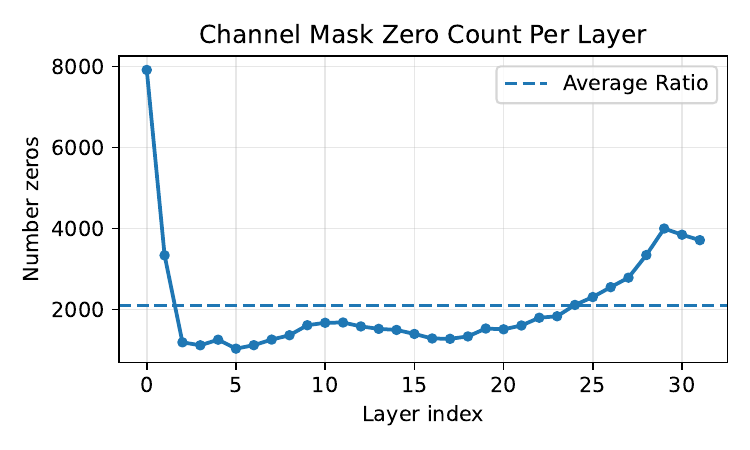}
    \caption{MLP channel sparsity (per layer).}
    \label{fig:llama7b_20_mlp}
  \end{subfigure}\hfill
  \begin{subfigure}[b]{0.46\linewidth}
    \centering
    \includegraphics[width=\linewidth]{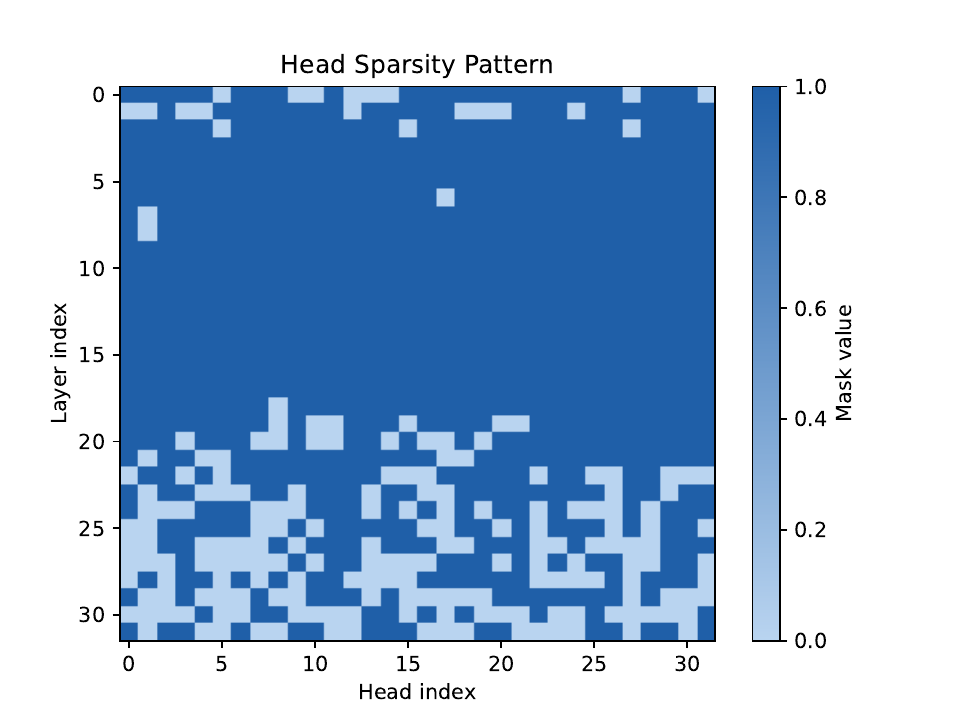}
    \caption{Attention-head sparsity (layer$\times$head).}
    \label{fig:llama7b_20_heads}
  \end{subfigure}
  \caption{\textbf{Dense-model sparsity patterns (LLaMA-7B, 20\% sparsity).} Left: layer-wise MLP channel sparsity. Right: learned head sparsity map.}
  \label{fig:llama7b_20_pattern}
\end{figure}

\begin{figure}[h]
  \centering
  \begin{subfigure}[b]{0.46\linewidth}
    \centering
    \includegraphics[width=\linewidth]{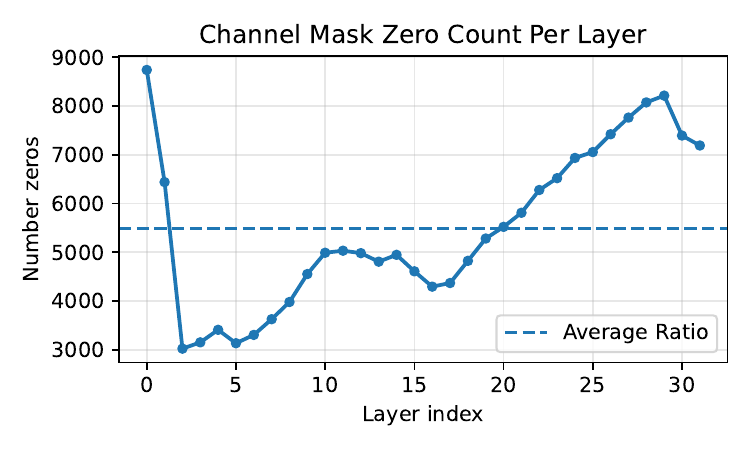}
    \caption{MLP channel sparsity (per layer).}
    \label{fig:llama7b_50_mlp}
  \end{subfigure}\hfill
  \begin{subfigure}[b]{0.46\linewidth}
    \centering
    \includegraphics[width=\linewidth]{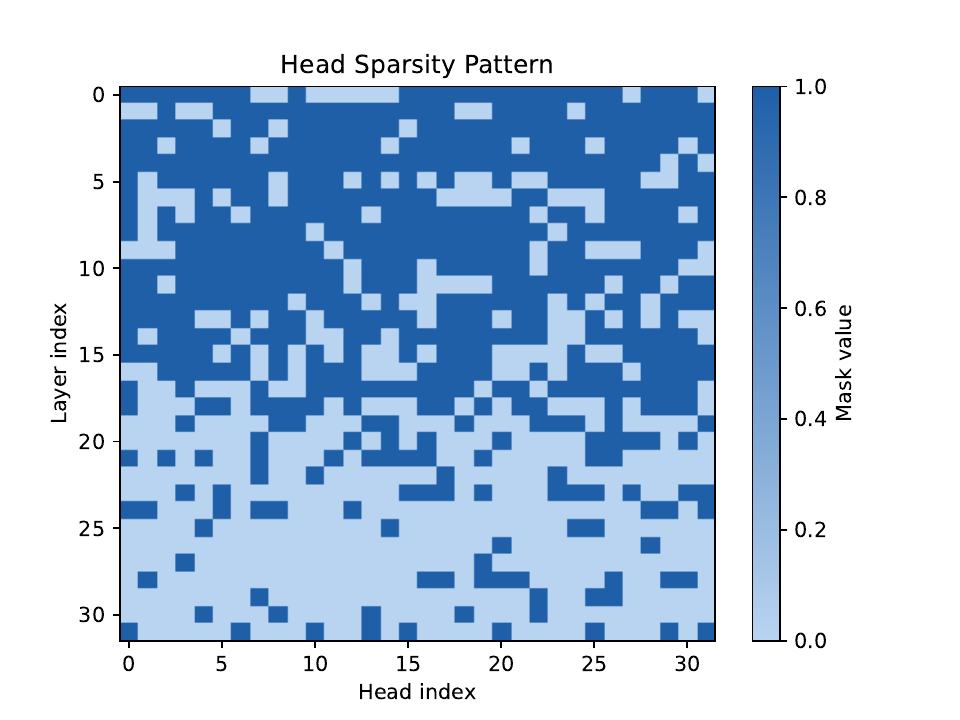}
    \caption{Attention-head sparsity (layer$\times$head).}
    \label{fig:llama7b_50_heads}
  \end{subfigure}
  \caption{\textbf{Dense-model sparsity patterns (LLaMA-7B, 50\% sparsity).} Sparsity increases toward later layers, and head pruning becomes more selective under higher compression.}
  \label{fig:llama7b_50_pattern}
\end{figure}

\begin{figure}[h]
  \centering
  \begin{subfigure}[b]{0.32\linewidth}
    \centering
    \includegraphics[width=\linewidth]{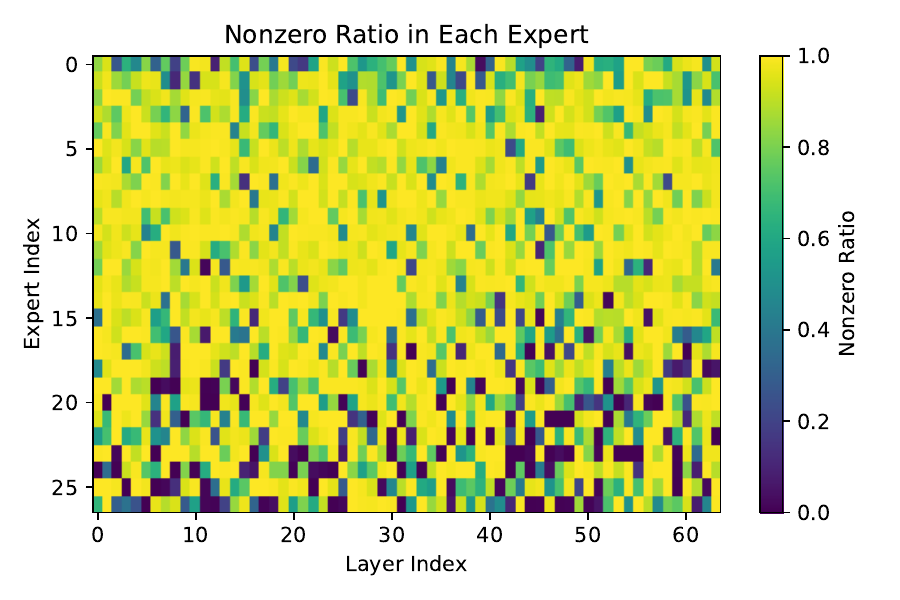}
    \caption{20\% sparsity.}
    \label{fig:deepseek_expert_20}
  \end{subfigure}\hfill
  \begin{subfigure}[b]{0.32\linewidth}
    \centering
    \includegraphics[width=\linewidth]{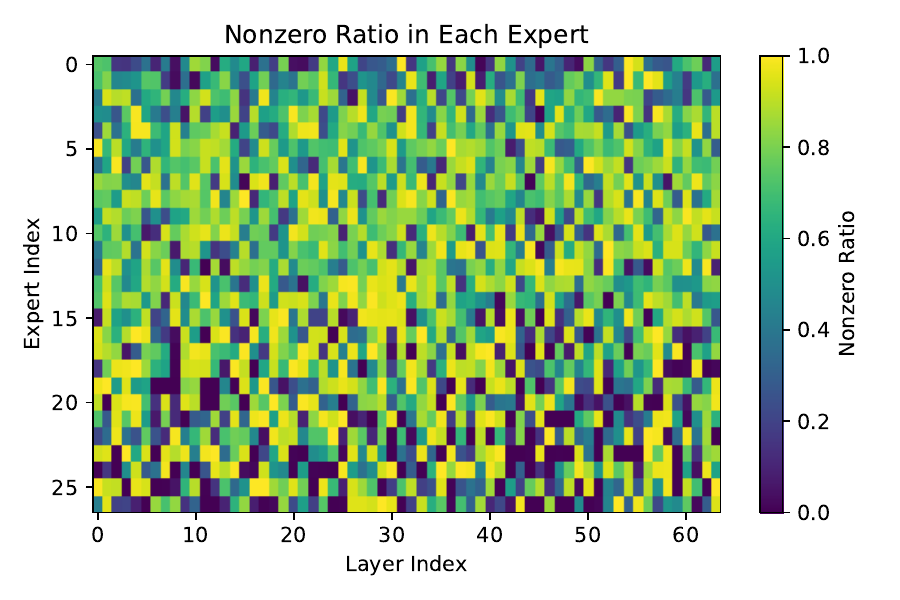}
    \caption{40\% sparsity.}
    \label{fig:deepseek_expert_40}
  \end{subfigure}\hfill
  \begin{subfigure}[b]{0.32\linewidth}
    \centering
    \includegraphics[width=\linewidth]{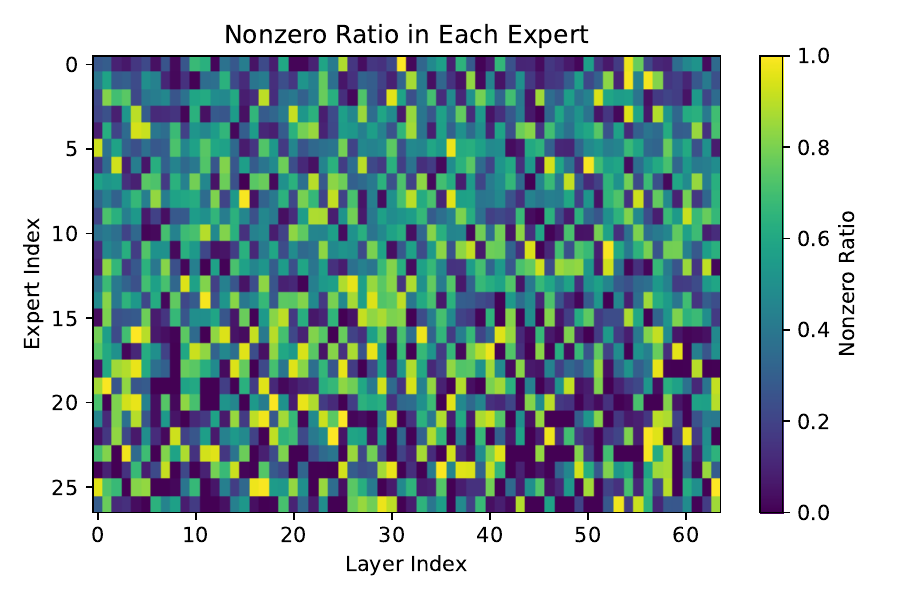}
    \caption{60\% sparsity.}
    \label{fig:deepseek_expert_60}
  \end{subfigure}
  \caption{\textbf{MoE expert sparsity patterns (DeepSeekMoE-16B).} Expert-wise sparsity maps at different target sparsities, showing that pruning increasingly concentrates on rarely activated experts.}
  \label{fig:deepseek_expert_pattern}
\end{figure}

\paragraph{Magnitude Distribution of Learned Masks.}
We visualize the final learned mask values for Qwen3-8B head and intermediate masks, as well as Qwen3-30B-A3B expert masks, in \cref{fig:qwen3-mask-value}. Across all cases, the distributions are centered near 1, indicating that most retained components preserve their original scale. At the same time, the masks exhibit nontrivial variation, including values close to 0 and occasional values larger than 1, especially for intermediate and expert masks. This confirms that DDP does not merely learn binary keep-or-prune decisions, but can also rescale retained components through its expanded mask parameterization. The broader distributions for intermediate and expert masks further suggest greater flexibility in redistributing capacity across fine-grained MLP channels and MoE experts.

\begin{figure}[h]
  \centering
  \begin{subfigure}[b]{0.32\linewidth}
    \centering
    \includegraphics[width=\linewidth]{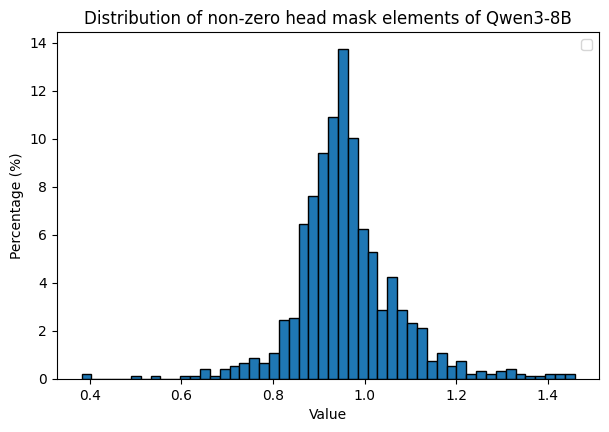}
    \caption{Qwen3-8B head masks.}
    \label{fig:qwen3_8b_head_mask_value}
  \end{subfigure}\hfill
  \begin{subfigure}[b]{0.32\linewidth}
    \centering
    \includegraphics[width=\linewidth]{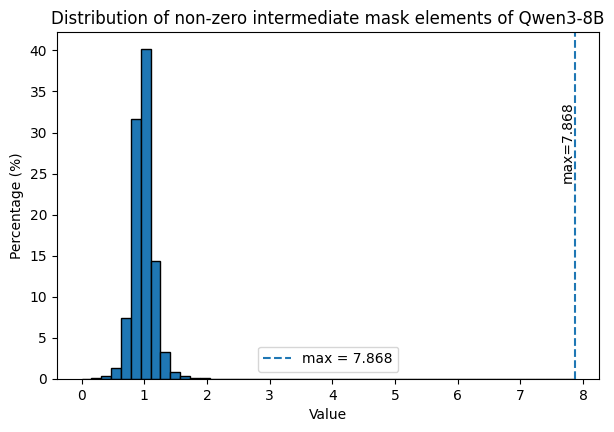}
    \caption{Qwen3-8B intermediate masks.}
    \label{fig:qwen3_8b_intermediate_mask_value}
  \end{subfigure}\hfill
  \begin{subfigure}[b]{0.32\linewidth}
    \centering
    \includegraphics[width=\linewidth]{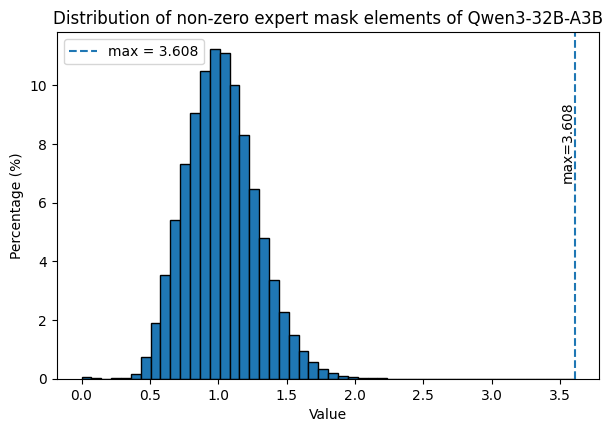}
    \caption{Qwen3-30B-A3B expert masks.}
    \label{fig:qwen3_32b_expert_mask_value}
  \end{subfigure}
  \caption{Magnitude distributions of learned mask values. The learned masks are centered near 1 but span a wider continuous range, demonstrating the effect of DDP's expanded mask parameterization.}
  \label{fig:qwen3-mask-value}
\end{figure}

\end{document}